\documentclass[runningheads]{llncs}
\usepackage[T1]{fontenc}
\usepackage{graphicx}
\usepackage{booktabs}
\usepackage[misc]{ifsym}
\newcommand{\corr}{(\Letter)}
\usepackage{caption}
\usepackage{subcaption}
\usepackage{tabularx}
\usepackage[svgnames,table]{xcolor}
\usepackage{amsmath}
\usepackage[colorlinks=true,
  linkcolor=DarkRed,
  citecolor=Blue,
  urlcolor=DarkCyan
]{hyperref}

\newcommand{\revision}[1]{{{\color{black}{#1}}}}

\def\ie{\emph{i.e., }}
\def\eg{\emph{e.g., }}

\spnewtheorem*{proofsketch}{Proof sketch}{\itshape}{\rmfamily}

\begin{document}

\title{The Illusion of Improvement:\\
Reject Inference Strategies in Credit Scoring}

\titlerunning{The Illusion of Improvement: Reject Inference Strategies in Credit Scoring}
\author{Bruno Scarone\inst{1}\corr \and
Ricardo Baeza-Yates\inst{2} %
}
\authorrunning{Scarone and Baeza-Yates}

\institute{%
Northeastern University, Boston, MA, USA \email{scarone.b@northeastern.edu}
\and
KTH Royal Institute of Technology, Stockholm, Sweden \email{rbaeza@acm.org}
}

\tocauthor{Bruno Scarone, Ricardo Baeza-Yates}
\toctitle{The Illusion of Improvement: Reject Inference Strategies in Credit Scoring}

\maketitle              %

\begin{abstract}
Reject inference methods are widely used to mitigate survival bias in credit scoring, yet their effectiveness remains poorly understood. We systematically evaluate several such methods and uncover a structural failure mode: in a natural retraining cycle, models whose accuracy improves while recall collapses create an illusion of improvement that leads practitioners to believe the system is getting better when, in fact, its \revision{rejection quality}---the ability to correctly screen out defaulters---is deteriorating. We then propose a controlled exploration strategy that breaks the feedback loop without statistical assumptions: the lender deliberately approves a fraction of rejected applicants and observes their true outcomes. We show that accuracy and rejection quality give opposite recommendations on whether to explore: accuracy favors no exploration, while rejection quality improves with it, confirming that standard evaluation metrics are misleading under selection bias. Even minimal exploration rates (2--5\%) prove sufficient in our experiments to diagnose the severity of the feedback loop at near-zero cost. 
Our findings are consistent across two machine learning methods and three real-world datasets, and suggest that standard evaluation protocols are inadequate for assessing models trained under survival bias.

\end{abstract}

\section{Introduction}
\label{sec:intro}

Nowadays many important decisions that affect people's lives are being taken with the support of predictive machine learning (ML) models with little or even no human intervention. Typical use cases include lending, hiring, scholarships, government subsidies, and even criminal justice, among others. In this context, these tools may discriminate against or favor certain individuals, producing significant but poorly understood social impacts. For this reason, under the new EU regulation for the use of AI \cite{AIAct}, most of these tools are classified as high-risk. 

In addition, these predictive tools are based on several assumptions that weaken their scientific validity. First, there is the belief that the available data reasonably represent the problem being solved (dataism \cite{dataism2013}), an assumption that in many cases is debatable and varies by instance. 
This creates what is known as a {\em reality gap} \cite{hildebrandt2021}. Second, in some of the use cases, a new individual's behavior is assumed to be predictable from data of ``similar'' people. This may hold on average, but certainly does not work for all individuals, especially considering that the similarity metric is another proxy for reality. Third, we typically evaluate system success through aggregate metrics (\eg accuracy). This implies that all errors carry the same impact, which is false in most cases.

Given these problems, consider the following hypothetical scenario. Imagine that a bank has built an ML model based on historical data to predict whether a new client will repay a loan. That is, we  assume that data from other people can be used to predict the behavior of a new client. This hypothesis is a simplifying assumption that may hold on average, but certainly does not hold for all individuals. If the prediction is positive, the applicant obtains a loan; otherwise, the application is rejected. After a certain period, as the bank accumulates more data, the model can be retrained, with the expectation of improving it. However, the new data suffer from survival or survivorship bias with respect to the model's errors. Specifically, we observe the outcomes of all false positives (favored individuals who represent the survival bias) but learn nothing about the false negatives (rejected or harmed individuals). This process, in addition to neglecting the bias issue described above, allows us to estimate the size of the reality gap.

In this work, using real data, we simulate this bank lending scenario and measure the information lost over time due to survival bias (false negatives), its impact on model performance, and the effectiveness of survival bias mitigation techniques, known as reject inference methods. To this end, we compare the model over time against an Oracle that has access to all available data and hence makes fewer mistakes. We also propose a controlled exploration technique that allows the bank to estimate the %
impact of survival bias at almost no cost.

The contributions of our work are summarized as follows:
\noindent
\textbf{(1)} We uncover a structural failure mode in iterative lending: models whose accuracy improves while recall collapses create an illusion of improvement, and we show this is not incidental: standard metrics systematically reward strategies that amplify survival bias, as demonstrated by Simple Extrapolation consistently outperforming the Oracle.
\textbf{(2)} We propose controlled exploration as an assumption-free alternative to reject inference, where the lender deliberately approves a fraction of rejects and observes true outcomes.
\textbf{(3)} We show that even minimal exploration (2–5\%) diagnoses feedback loop severity at near-zero cost, and that accuracy and rejection quality give opposite recommendations, confirming that standard evaluation is misleading under survival bias.

The rest of the paper is organized as follows. We cover related work in Section~\ref{sec:related_work} and explain our methodology and datasets used in Section~\ref{sec:methods_data}. Then we present the reject inference results in Section~\ref{sec:reject-inference} and the exploration strategy in Section~\ref{sec:exploration}, ending with the conclusions in Section~\ref{sec:conclusions}.
\section{Related Work}
\label{sec:related_work}

\noindent
\textbf{Loan Default Prediction.}
Credit scoring and default risk estimation are central to financial lending \cite{he2018novel}, with ML-based approaches gaining popularity \cite{suhadolnik2023machine}. The nature and quality of available data have been questioned \cite{ippoliti2017dark}, and datasets are oftentimes imbalanced, with evidence that models perform better on balanced versions \cite{markov2022credit,alam2020investigation}. Tree-based models are a popular choice for their interpretability \cite{markov2022credit,alam2020investigation,suhadolnik2023machine}, though no consensus exists on the most effective method \cite{markov2022credit}. We evaluate a range of models on three datasets and select the two best-performing (both tree-based). Two datasets are balanced via under-sampling.

\noindent
\textbf{Survival Bias in Loan Default Prediction.}
An extensive list of works has highlighted the diverse types of bias that can affect the ML pipeline \cite{mehrabi2021survey,hellstrom2020bias}. We consider survival bias, a form of selection bias in which one focuses on a subset of a population that passes (``survives'') a given selection criterion. 
Survival bias is well-documented in mutual fund data \cite{wermers1997momentum,elton2015,rohleder2011survivorship} and it has been assessed in self-reported financial datasets \cite{weinblat2018forecasting}, but in those cases it arises from data collection rather than model-driven selection. More broadly, the feedback loop we study is an empirical instance of performative prediction~\cite{perdomo2020performative}, where model deployment shifts the data distribution; analogous runaway (self-reinforcing) effects have been documented in predictive policing~\cite{ensign2018runaway}.
When a credit scoring model is used to accept or reject applicants, outcomes are observed only for accepted applicants. Rejected applicants never receive credit, so whether they would have defaulted or not remains a counterfactual that cannot be verified. 
\revision{\emph{Reject inference} (RI) refers to the family of techniques that attempt to recover this missing information, thereby reducing the survival bias in the training data used for subsequent model retraining~\cite{hand1993can,banasik2003sample,ehrhardt2021reject}. 
The RI literature encompasses methods that impute labels for rejected applicants via probabilistic assignment, direct model extrapolation, nearest-neighbor transfer, or bin-averaged imputation~\cite{banasik2007reject,banasik2003sample,ehrhardt2021reject}, as well as self-learning approaches that iteratively refine imputed labels~\cite{kozodoi2019shallow}, deep generative models~\cite{mancisidor2020deep}, and Monte Carlo simulation frameworks~\cite{anderson2023monte}; Ehrhardt et al.~\cite{ehrhardt2021reject} provide a comprehensive survey.}

\section{Methodology}\label{sec:methods_data}

We consider a bank that uses a binary classifier to predict loan defaults and decide which applicants to approve. The bank operates in a \emph{dynamic} setting: at each lending cycle $t$, a new batch of applicants arrives, the current model $M_t$ produces predictions, and the bank observes outcomes only for approved loans. We adopt the usual label convention $y = 1$ for default and $y = 0$ for non-default. Our goal is to evaluate how this selective observation process affects model quality over successive retraining cycles.\footnote{The source code is available at\\ \url{https://github.com/bscarone/survival-bias-credit-scoring}.}

Hence, we compare five RI strategies, evaluated against two baselines: \emph{Biased}, which trains only on accepted applicants, and \emph{Oracle}, which has access to true labels for all applicants (both defined formally in Section~\ref{subsec:baselines}). RI refers to techniques~\cite{hand1993can,banasik2003sample,ehrhardt2021reject} that mitigate survival bias by imputing the missing labels of rejected applicants and include them in the training set alongside accepted applicants, who retain their true labels.

\noindent
\textbf{Reject Inference Strategies.}
\label{sec:ri-strategies}
\emph{Parceling}~\cite{banasik2003sample,siddiqi2012credit} %
bins applicants by predicted risk score into $K{=}10$ equal-frequency parcels and assigns default labels to rejects within each parcel at the observed bad rate among accepts in that parcel. 
\emph{Fuzzy Augmentation}~\cite{banasik2007reject,ehrhardt2021reject} assigns each reject a label sampled from $\tilde{y}_i \sim \text{Bernoulli}(\hat{p}_i)$, where $\hat{p}_i$ is the model's predicted default probability; we use this probabilistic variant, equivalent in expectation to the original weighted formulation, for compatibility with tree-based classifiers. \emph{Simple Extrapolation}~\cite{banasik2003sample,ehrhardt2021reject} applies the current model to rejected applicants and treats its hard predictions as ground-truth labels. \emph{Twins}~\cite{banasik2003sample,ehrhardt2021reject} transfers labels from accepted to rejected applicants via $k$-nearest-neighbor matching ($k=5$) in standardized feature space. \emph{Shallow Self-Learning}~\cite{kozodoi2019shallow} first filters rejects via Isolation Forest to remove distributional outliers, then iteratively labels only high-confidence rejects using a weak learner with imbalance-corrected class weights.

\noindent
\textbf{Data.} We use three public credit scoring datasets~\cite{he2018novel,markov2022credit}: Default~\cite{default_of_credit_card_clients_350}
(30,000 persons, 23 features, 77.9\% non-default),
PPDai~\cite{ppdai} (31,389 persons, 29 features, 77.0\%),
and LendingClub~\cite{lendingclube2007_2020Q3}
(302,595 persons, 97 features, 77.0\%).
The binary label indicates whether the borrower defaulted.
Both PPDai and LendingClub were down-sampled to achieve
these imbalance ratios. For LendingClub, we filtered to loans originated in 2017, removed post-hoc data leakage columns (\eg payment outcomes, recovery fields), and removed high-null columns. Full preprocessing details are provided in Appendix~\ref{app:preprocessing}.

\noindent
\textbf{Models.} We evaluated five binary classifiers across all three datasets (Appendix~\ref{app:model-selection}), selecting two tree-based models that consistently ranked among the top performers: Gradient Boosted Decision Trees (GBDT) and Random Forest (RF). These models are inherently non-linear and are implemented in scikit-learn~\cite{sklearn_gral} with the default decision threshold $\tau = 0.5$. \revision{Both are tree-based ensemble methods; we discuss the potential influence of model choice on our findings in Section~\ref{sec:conclusions}.}

\noindent
\textbf{Temporal Lending Simulation.}
We simulate iterative model-based lending as follows. An initial training set $\mathcal{D}_0$ of size $n_0$ is drawn uniformly from the complete dataset $\mathcal{D}$, modeling a bank's initial data collection step. The value of $n_0$ is determined per dataset--model pair via learning curve analysis: we train models at increasing sample sizes and select the point at which accuracy stabilizes (Appendix~\ref{app:nzero}). This
yields $n_0 = 5{,}000$ for Default and PPDai, and
$n_0 = 60{,}000$ for LendingClub.
At each subsequent lending cycle $t \in \{1, \dots, T\}$, we take a uniform random sample of size $n' = \lfloor (n - n_0) / T \rfloor$ from the remaining data, treating it as the pool of new applicants. The current model $M_t$ produces predictions: applicants with $\hat{y} = 0$ are accepted and those with $\hat{y} = 1$ are rejected. Each reject inference strategy (Section~\ref{sec:ri-strategies}) defines a different rule for incorporating or excluding rejected applicants from the training set for $M_{t+1}$.

\noindent
\textbf{Evaluation metrics.}
\revision{Standard metrics are computed at each cycle on a held-out split of the accumulated training set $\mathcal{D}_t$, on which $M_t$ issues predictions of both classes.}
Beyond standard classification metrics (accuracy, precision, recall), we track two quantities designed to capture the dynamics specific to iterative model-based lending. The \emph{training set default rate} (TDR) measures the proportion of defaulters in the training set at each iteration:
    $\text{TDR}_t = |\{i \in \mathcal{D}_t : y_i = 1\}|/{|\mathcal{D}_t|}$.
In the absence of survival bias, this rate should remain close to the population default rate. Deviations signal that the training distribution is diverging from the true applicant population.

Standard metrics capture how well the model classifies observed applicants, 
\revision{but do not measure \emph{rejection quality}---that is, whether the model's rejections are correctly targeting true defaulters rather than creditworthy applicants.}
Inspired by \cite{kozodoi2019shallow}, we define the \emph{Kickout} metric to assess rejection quality.\footnote{Our definition differs from that of \cite{kozodoi2019shallow}, who define Kickout as a two-model comparison. Our metric evaluates a single model's rejection precision on the full applicant pool. We retain the name for its intuitive appeal.} Given predictions on the full applicant pool, let $n_{\text{BR}}$ denote the number of true defaulters rejected and $n_{\text{GR}}$ the number of true non-defaulters rejected:
\begin{equation}
    \text{KO} = \frac{n_{\text{BR}} - n_{\text{GR}}}{n_{\text{BR}} + n_{\text{GR}}} =
    \frac{TP-FP}{TP+FP} = \frac{P(\hat{y}{=}1 \land y{=}1) - P(\hat{y}{=}1 \land y{=}0)}{P(\hat{y}{=}1)}.
    \label{eq:kickout}
\end{equation}
This metric ranges over $[-1, 1]$: $\text{KO} = 1$ indicates that every rejected applicant is a true defaulter, $0$ indicates equal counts of rejected defaulters and non-defaulters, and $-1$ indicates that every rejection is a non-defaulter. In the temporal setting, an increasing KO indicates that rejections are becoming more targeted toward true defaulters, while a decreasing KO indicates growing misalignment with actual risk. Unless stated as relative, differences in these metrics are absolute throughout the paper, in percentage points.
\section{Reject Inference Under Iterative Survival Bias}
\label{sec:reject-inference}

We evaluate the five reject inference strategies presented in Section~\ref{sec:methods_data} against two baselines (Biased and Oracle) that we introduce below. All seven strategies begin from the same initial training set; they differ in whether and how they incorporate rejected applicants at each lending cycle.
We report results over $T{=}10$ iterations, averaged across five independent runs. We present Default/GBDT as the primary configuration in the main text; full results for all other configurations, including the effect of dataset size, are reported in Appendix~\ref{app:ri-additional-results} and are consistent with the patterns described here.

\subsection{Baselines: Biased and Oracle}
\label{subsec:baselines}

Before evaluating reject inference strategies, we establish two baselines that represent the extremes of information availability.

The \textbf{Biased} (filtered) baseline trains exclusively on accepted applicants, discarding all rejected samples. At each cycle $t>0$, the training set consists of the previous data augmented only with the newly accepted applicants and their observed outcomes:
\revision{$\mathcal{D}_t = \mathcal{D}_{t-1} \cup \{(\mathbf{x}_i, y_i) : \mathbf{x}_i \in \mathcal{S}_t,\ \hat{y}_i = 0\}$}, where $\mathcal{S}_{t}$ denotes the full sample of new applicants at cycle $t$.
This is the default approach when no RI is applied and represents the full extent of survival bias accumulation. The \textbf{Oracle} baseline has access to the true outcome $y_i$ for all applicants, both accepted and rejected:
\revision{$\mathcal{D}_t = \mathcal{D}_{t-1} \cup \{(\mathbf{x}_i, y_i) : \mathbf{x}_i \in \mathcal{S}_t\}$}.
This is infeasible in practice, one cannot observe the default behavior of applicants who were never granted credit, but it provides a theoretical upper bound against which all other strategies can be measured.
Their contrasting behavior under iterative retraining (Figure~\ref{fig:biased-oracle}) establishes the degradation pattern that reject inference strategies aim to correct, and provides a first indication that standard metrics do not reliably distinguish good models from biased ones.

\noindent\textbf{Biased baseline.}
The training set default rate decreases monotonically under the Biased strategy, falling from the population rate of 22.0\% to approximately 17\% on Default/GBDT by $t{=}10$ (Figure~\ref{fig:default-rate-kickout}a), a deflation of roughly 5\%. This deflation reflects the fundamental mechanism of survival bias: as the model rejects applicants it considers risky, it removes precisely the observations that are most informative about default behavior from its own training data.
The consequences for predictive performance are asymmetric (Figure~\ref{fig:biased-oracle}). Accuracy remains stable and even increases slightly, rising from approximately 0.820 to 0.835 on Default/GBDT. However, this apparent stability masks a collapse in recall, from approximately 0.37 to 0.11, as the model progressively loses its ability to detect the defaulters it has excluded from training. The model becomes increasingly confident in predicting non-default for the homogeneous population it observes, while the representation of the default class in its training data shrinks with each iteration. Kickout declines from approximately 0.27 toward 0.19, indicating that the model's rejection decisions become progressively less aligned with actual default risk (Figure~\ref{fig:default-rate-kickout}b). This pattern is consistent across all three datasets and both modeling techniques, though the magnitude varies with dataset scale (Appendix~\ref{app:ri-dataset-size}).

\begin{figure}[t]
    \centering
    \includegraphics[width=\textwidth]{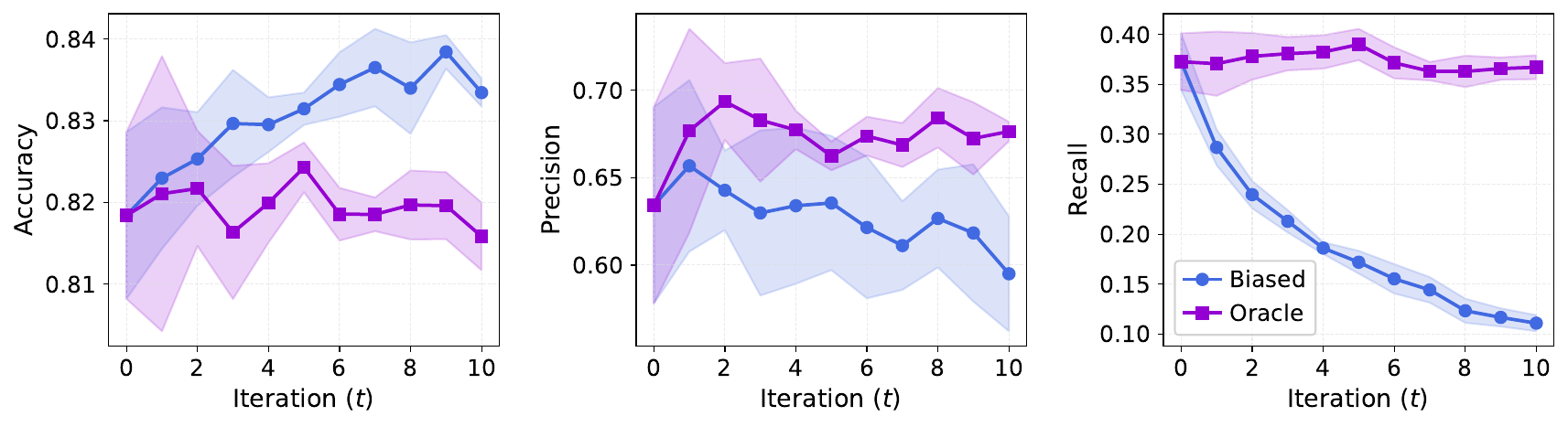}
    \caption{Accuracy, precision, and recall over 10 lending iterations for the Biased and Oracle strategies on the Default dataset (GBDT, $n_0{=}5{,}000$, $\tau{=}0.5$). Shaded regions indicate $\pm 1$ standard deviation across 5 runs. Accuracy remains stable under the Biased strategy while recall collapses from 0.37 to 0.11: the model progressively loses its ability to detect the defaulters it has excluded from training.}
    \label{fig:biased-oracle}
\end{figure}

\noindent\textbf{Oracle baseline.}
With access to the true default labels for all applicants, Oracle maintains a training set default rate of approximately 22.1\%, essentially identical to the population rate. Recall remains near its initial level across all iterations, confirming that access to rejected applicants' true outcomes prevents the information loss that drives the Biased strategy's degradation.

Yet Oracle achieves \emph{lower accuracy} than the Biased baseline: approximately 0.816 versus 0.835 on Default/GBDT at $t{=}10$ (Figure~\ref{fig:biased-oracle}). This is the first sign that standard evaluation metrics may reward survival bias rather than penalize it. By including genuine label diversity, \ie borderline cases and applicants who default despite having features similar to non-defaulters, Oracle makes learning harder and produces a less decisive classifier. The Biased model, by contrast, trains on an increasingly homogeneous population and achieves high accuracy precisely because it has narrowed the problem it needs to solve. This inversion, where \emph{more information yields lower measured performance}, will recur more dramatically when we compare reject inference strategies below.

\subsection{\revision{Theoretical Analysis of the Accuracy--Recall Paradox}}
\label{subsec:theory}

We formalize the structural failure mode identified empirically in Section~\ref{subsec:baselines} under the following assumptions;
\textbf{(A1)} At cycle $t$ the fixed-threshold binary classifier $M_t$ produces $\hat{y} = M_t(\mathbf{x}) \in \{0,1\}$, where $y=1$ denotes default.\footnote{We leave the cycle index on $\hat{y}$ implicit; since $\hat{y}$ is the cycle-$t$ prediction, the screening quantities $\alpha_t$, $\mathrm{prec}_t$, $\beta_t$ depend on $t$ through $M_t$.%
}
\textbf{(A2)} At each cycle the new applicants are drawn uniformly, without replacement, and independently of $M_t$'s past decisions from a fixed dataset $\mathcal{D}$ with base default rate $\pi = P_{\mathcal{D}}(y=1) \in (0, 1/2]$, so survival bias enters only through the training set, not the arrivals. 
\textbf{(A3)} True labels are observed only for accepted applicants ($\hat{y}=0$).
\textbf{(A4)} Screening is non-degenerate: at every cycle $M_t$ predicts $\hat{y}=1$ (defaulter) for a nonempty proper subset of the arrivals, i.e.\ the rejection rate satisfies $\alpha_t := P_{\mathcal{D}}(\hat{y}=1) \in (0,1)$.
\textbf{(A5)} Screening is informative: on the $t$ arrival cycles, the rejection precision exceeds the base rate, $\mathrm{prec}_t := P_{\mathcal{D}}(y=1 \mid \hat{y}=1) > \pi$ for all $t$; equivalently, a rejected applicant is more likely to default than a randomly drawn one. By the definition of Kickout (Eq.~\ref{eq:kickout}), $\mathrm{KO}_t = 2\,\mathrm{prec}_t - 1$, so A5 is equivalent to $\mathrm{KO}_t > 2\pi - 1$; since $\pi \le 1/2$, it is implied by $\mathrm{KO}_t > 0$, which holds for the Biased baseline throughout our experiments (Section~\ref{subsec:baselines}; Appendix~\ref{app:ri-additional-results}).
\textbf{(A6)} The model is retrained on the accumulated accepted data without class rebalancing --- the \emph{Biased} baseline of Section~\ref{subsec:baselines} --- and the initial set $\mathcal{D}_0$ is a uniform sample of $\mathcal{D}$, so $\pi_0 = \pi$.
We write 
$\pi_t := P_{\mathcal{D}_t}(y=1)$ for its default rate, 
which is the training set default rate $\text{TDR}_t$ of Section~\ref{sec:methods_data}.\footnote{Quantities are probabilities over two populations: the screening rates $\alpha_t$, $\mathrm{prec}_t$, $\beta_t$ over a draw from $\mathcal{D}$ ($P_{\mathcal{D}}$), and $\pi_t$ together with the evaluation metrics of Proposition~\ref{prop:decomp} over the accepted pool $\mathcal{D}_t$ ($P_{\mathcal{D}_t}$).}
Here $\pi$ is exact (the default rate of $\mathcal{D}$).
All proofs are deferred to Appendix~\ref{app:theory}.

\begin{lemma}
\label{lem:cleaning}
Under A1--A6, the default rate of the accepted applicants at cycle $t$, $\beta_t := P_{\mathcal{D}}(y=1 \mid \hat{y}=0)$, satisfies $\beta_t < \pi$: the accepted applicants default less often than the population.
\end{lemma}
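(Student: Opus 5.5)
The plan is a one-line application of the law of total probability over the screening partition $\{\hat{y}=1\}$, $\{\hat{y}=0\}$ of a draw from $\mathcal{D}$ at cycle $t$. By A1, $M_t$ is a fixed deterministic map, so $\hat{y}$ is a well-defined event under $P_{\mathcal{D}}$. By A4, both cells of the partition have positive probability: $\alpha_t\in(0,1)$. Hence both conditional default rates $\mathrm{prec}_t$ and $\beta_t$ are well defined. Conditioning on the prediction gives
\begin{equation*}
\pi \;=\; P_{\mathcal{D}}(y=1) \;=\; \alpha_t\,\mathrm{prec}_t + (1-\alpha_t)\,\beta_t .
\end{equation*}
So $\pi$ is a strict convex combination of $\mathrm{prec}_t$ and $\beta_t$.

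Next I solve for $\beta_t$. Using $1-\alpha_t>0$,
\begin{equation*}
\beta_t \;=\; \frac{\pi - \alpha_t\,\mathrm{prec}_t}{1-\alpha_t},
\qquad\text{so}\qquad
\beta_t - \pi \;=\; \frac{\alpha_t\,(\pi - \mathrm{prec}_t)}{1-\alpha_t}.
\end{equation*}
By A4, $\alpha_t>0$ and $1-\alpha_t>0$. By A5, $\mathrm{prec}_t>\pi$. Therefore the right-hand side is strictly negative, which gives $\beta_t<\pi$.

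The remaining assumptions play only a bookkeeping role. A2 ensures the arrival distribution at cycle $t$ is the fixed $\mathcal{D}$, so $\pi$, $\alpha_t$, $\mathrm{prec}_t$ and $\beta_t$ all refer to the same measure $P_{\mathcal{D}}$. This is the point to check carefully, because the footnote distinguishes $P_{\mathcal{D}}$ from $P_{\mathcal{D}_t}$. The lemma concerns only the former, so the training-set rate $\pi_t$ and A3 and A6 are not needed here; they will matter for the later claim that $\mathrm{TDR}_t$ decreases.

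There is essentially no obstacle; the only step needing care is making sure the conditioning is legitimate. That is exactly where A4 enters, ruling out $\alpha_t\in\{0,1\}$, in which case one conditional probability is undefined and the inequality could degenerate to equality.
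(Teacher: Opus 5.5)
Your proof is correct and follows the paper's argument exactly: the law of total probability over the partition $\{\hat{y}=1\}$, $\{\hat{y}=0\}$, then the rearrangement $\beta_t - \pi = \alpha_t(\pi-\mathrm{prec}_t)/(1-\alpha_t)$, whose sign is fixed by A4 and A5. Your remarks on which assumptions are actually used are accurate; A3 and A6 indeed play no role in this lemma.
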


\begin{proposition}[Single-cycle default-rate deflation]
\label{prop:onecycle}
Under A1--A6, after the first retraining cycle the accepted pool has a strictly lower default rate than the population: $\pi_1 < \pi_0 = \pi$.
\end{proposition}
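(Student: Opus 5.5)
The plan is to write $\pi_1$ as a mixture of the initial pool and the newly accepted batch, and then apply Lemma~\ref{lem:cleaning} to the batch. Under the Biased update (A6), $\mathcal{D}_1 = \mathcal{D}_0 \cup \mathcal{A}_1$, where $\mathcal{A}_1 := \{(\mathbf{x}_i,y_i) : \mathbf{x}_i\in\mathcal{S}_1,\ \hat{y}_i=0\}$ is the set of accepted arrivals whose labels are observed (A3). Counting defaulters in each part gives
\[
\pi_1 = \frac{|\mathcal{D}_0|\,\pi_0 + |\mathcal{A}_1|\,\bar\beta_1}{|\mathcal{D}_0| + |\mathcal{A}_1|},
\]
where $\bar\beta_1$ is the default rate among $\mathcal{A}_1$. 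This expresses $\pi_1$ as a convex combination of $\pi_0$ and $\bar\beta_1$, with weights proportional to the two set sizes.

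Next I would show that $\mathcal{A}_1$ carries strictly positive weight and that its default rate lies strictly below $\pi$.
\begin{itemize}
\item By A4 the acceptance rate is $1-\alpha_1>0$, so the expected size of $\mathcal{A}_1$ is positive.
\item By A2 the arrivals are drawn from $\mathcal{D}$ independently of $M_1$, and $\beta_1 = P_{\mathcal{D}}(y=1\mid \hat{y}=0)$ is exactly the population-level default rate of the accepted arrivals.
\item Lemma~\ref{lem:cleaning} gives $\beta_1<\pi$.
\end{itemize}
By A6, $\pi_0=\pi$. Once $\bar\beta_1$ is identified with $\beta_1$, the convex combination gives
\[
\pi_1-\pi = \frac{|\mathcal{A}_1|}{|\mathcal{D}_0|+|\mathcal{A}_1|}\,(\beta_1-\pi) < 0.
\]

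The main obstacle is justifying that identification, i.e.\ passing from the finite sampled batch to the population quantities. A literal random batch could, by chance, have an empirical accepted default rate at or above $\pi$. I would handle this the way the footnote's convention suggests: read all rates as probabilities under the sampling distribution (expectations), or equivalently as population proportions. Under A2, uniform draws independent of $M_1$ make the accepted arrivals a $P_{\mathcal{D}}$-representative sample of $\{\hat{y}=0\}$, so $E[\bar\beta_1]=\beta_1$. In the same proportion sense, $\mathcal{D}_0$ has default rate $\pi$ (A6).

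Under that reading the argument reduces to the single mixture identity above together with Lemma~\ref{lem:cleaning}. If exact expectations of the ratio are needed, I would instead compute the mixture in terms of expected counts, $E[\#\text{defaults}]/E[|\mathcal{D}_1|]$, and state the proposition for that quantity.
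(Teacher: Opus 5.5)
Your proposal is correct and follows the paper's proof: write $\pi_1$ as the size-weighted convex combination of $\pi_0=\pi$ (A6) and the default rate of the nonempty accepted batch, then apply Lemma~\ref{lem:cleaning} to get $\beta_1<\pi$. The paper takes $|\{\mathbf{x}_i\in\mathcal{S}_1:\hat{y}_i=0\}|=m_1\ge 1$ directly from the literal wording of A4 (the rejected set is a proper subset of the arrivals) rather than an expected size, and it simply identifies the batch's default rate with $\beta_1$; your finite-sample/expectation caveat is a refinement that the paper leaves implicit, not a different route.
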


\noindent
Since each accepted batch defaults less than the population (Lemma~\ref{lem:cleaning}), mixing the first batch into $\mathcal{D}_0$ drags the pool's default rate below $\pi$ (Proposition~\ref{prop:onecycle}).

\begin{proposition}[Accuracy--recall decomposition on the accepted pool]
\label{prop:decomp}
On the accepted pool at cycle $t$ (default rate $\pi_t$), evaluated as in Section~\ref{sec:methods_data}, accuracy decomposes as
$\mathrm{acc}_t \;=\; (1-\pi_t)\,\mathrm{spec}_t \;+\; \pi_t\,\mathrm{rec}_t$,
where $\mathrm{spec}_t = P_{\mathcal{D}_t}(\hat{y}=0 \mid y=0)$ and $\mathrm{rec}_t = P_{\mathcal{D}_t}(\hat{y}=1 \mid y=1)$ are the specificity and recall of the model on that pool. In particular, the contribution of recall to accuracy is exactly $\pi_t\,\mathrm{rec}_t$, which vanishes as $\pi_t \to 0$. Hence as $\pi_t$ falls, accuracy is increasingly governed by specificity and increasingly insensitive to recall. Thus, recall may decline while accuracy remains constant or increases. %
\end{proposition}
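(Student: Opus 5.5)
The decomposition is the law of total probability applied to the correctness event $\{\hat{y}=y\}$ on the evaluation pool $\mathcal{D}_t$. We partition by the true label: $\{\hat{y}=y\}$ is the disjoint union of $\{\hat{y}=0,\,y=0\}$ and $\{\hat{y}=1,\,y=1\}$. This gives
\[
\mathrm{acc}_t = P_{\mathcal{D}_t}(\hat{y}=0 \mid y=0)\,P_{\mathcal{D}_t}(y=0) + P_{\mathcal{D}_t}(\hat{y}=1 \mid y=1)\,P_{\mathcal{D}_t}(y=1).
\]
Substituting $P_{\mathcal{D}_t}(y=1)=\pi_t$ (the definition of $\mathrm{TDR}_t$) together with the definitions of $\mathrm{spec}_t$ and $\mathrm{rec}_t$ yields the displayed identity. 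For the conditional probabilities to be well defined we need $\pi_t\in(0,1)$. Under A6 with $\pi_0=\pi\in(0,1/2]$, $\pi_t>0$ because $\mathcal{D}_t\supseteq\mathcal{D}_0$ contains defaulters. If a degenerate case arises, we adopt the convention that a term with zero weight is dropped. We also note that the held-out split is a sample of $\mathcal{D}_t$, so the same identity holds verbatim with empirical frequencies. The only bookkeeping point is that the split's default rate should be identified with $\pi_t$, exactly or in expectation under a uniform split.

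For the ``in particular'' clause, the recall term is exactly $\pi_t\,\mathrm{rec}_t$. Since $0\le \mathrm{rec}_t\le 1$, it satisfies $0\le \pi_t\mathrm{rec}_t\le \pi_t$, so it tends to $0$ as $\pi_t\to 0$. The sensitivity statement follows by treating the identity as affine in $\mathrm{rec}_t$ with slope $\pi_t$ and in $\mathrm{spec}_t$ with slope $1-\pi_t$. Propositions~\ref{prop:onecycle} and Lemma~\ref{lem:cleaning} show that $\pi_t$ drops below $\pi$. As it does, the weight on recall shrinks while the weight on specificity grows toward $1$.

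Finally, we justify ``recall may decline while accuracy remains constant or increases'' by exhibiting the tradeoff explicitly. Comparing two cycles $s<t$, we write
\[
\mathrm{acc}_t-\mathrm{acc}_s = (1-\pi_t)(\mathrm{spec}_t-\mathrm{spec}_s) + \pi_t(\mathrm{rec}_t-\mathrm{rec}_s) + (\pi_s-\pi_t)(\mathrm{spec}_s-\mathrm{rec}_s).
\]
This is an add-and-subtract rearrangement that we verify by expansion. When $\pi_t<\pi_s$ and $\mathrm{spec}_s\ge\mathrm{rec}_s$, which is typical for imbalanced data and matches the observed values, the third term is nonnegative. The second term is negative but weighted by the small $\pi_t$. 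Hence accuracy is nondecreasing whenever the specificity gain plus the base-rate shift outweighs $\pi_t(\mathrm{rec}_s-\mathrm{rec}_t)$. A concrete numerical instance shows the condition is satisfiable: take the empirical values $\pi$ from $0.22$ to $0.17$, recall from $0.37$ to $0.11$, and specificity near $0.97$.

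The main subtlety is not algebraic. It is stating the claim as a possibility rather than a necessity, since the proposition does not assert that accuracy rises. Setting up the three-term difference identity with the correct signs is the step I would check most carefully.
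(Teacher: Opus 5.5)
Your core argument is exactly the paper's proof: you partition $\{\hat{y}=y\}$ by the true label, apply the law of total probability with $P_{\mathcal{D}_t}(y=1)=\pi_t$, and bound the recall term by $0\le \pi_t\,\mathrm{rec}_t\le \pi_t$. Your remarks on well-definedness and on the held-out split are sound bookkeeping that the paper omits. The three-term difference identity goes beyond the paper, which leaves the final sentence of the proposition informal. The identity is correct (it checks by expansion), and it usefully isolates the nonnegative contribution $(\pi_s-\pi_t)(\mathrm{spec}_s-\mathrm{rec}_s)$ of a falling base rate.

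Your numerical witness, however, does not work as stated. Hold specificity at $0.97$, let $\pi$ go from $0.22$ to $0.17$, and let recall go from $0.37$ to $0.11$. Then $\mathrm{acc}_s = 0.78\cdot 0.97 + 0.22\cdot 0.37 \approx 0.838$ and $\mathrm{acc}_t = 0.83\cdot 0.97 + 0.17\cdot 0.11 \approx 0.824$, so accuracy falls. In your identity, the recall term is $-0.0442$ and the base-rate term only $+0.030$.

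To get nondecreasing accuracy you need a specificity gain of at least $0.0142/0.83\approx 0.017$, e.g.\ $\mathrm{spec}$ rising from $0.97$ to about $0.99$. The paper's own figures supply such a witness. Backing out specificity from $\mathrm{acc}=0.820$ and $0.835$ gives roughly $0.947\to 0.984$, for which your identity returns about $+0.015$. Since the claim is only one of possibility, this is a fixable slip, but your instance must state that specificity rises rather than stays near $0.97$.
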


Proposition~\ref{prop:onecycle} establishes the one-cycle deflation of the TDR ($\pi_t$); empirically it compounds across cycles (Figure~\ref{fig:default-rate-kickout}a), driving recall down. Since recall contributes to accuracy only in proportion to $\pi_t$ (Proposition~\ref{prop:decomp}), accuracy can hold even if recall collapses (Figure~\ref{fig:biased-oracle}).

\subsection{Reject Inference Strategies Comparison}
\label{subsec:ri-comparison}

We now examine whether the five reject inference strategies (Section~\ref{sec:ri-strategies})---Parceling, Fuzzy Augmentation, Simple Extrapolation, Shallow Self-Learning, and Twins---can correct the survival bias documented above. The results reveal a striking paradox: the strategy that performs best on standard predictive metrics is also the one that most severely distorts the training data.

\begin{figure}[t]
    \centering
    \includegraphics[width=\textwidth]{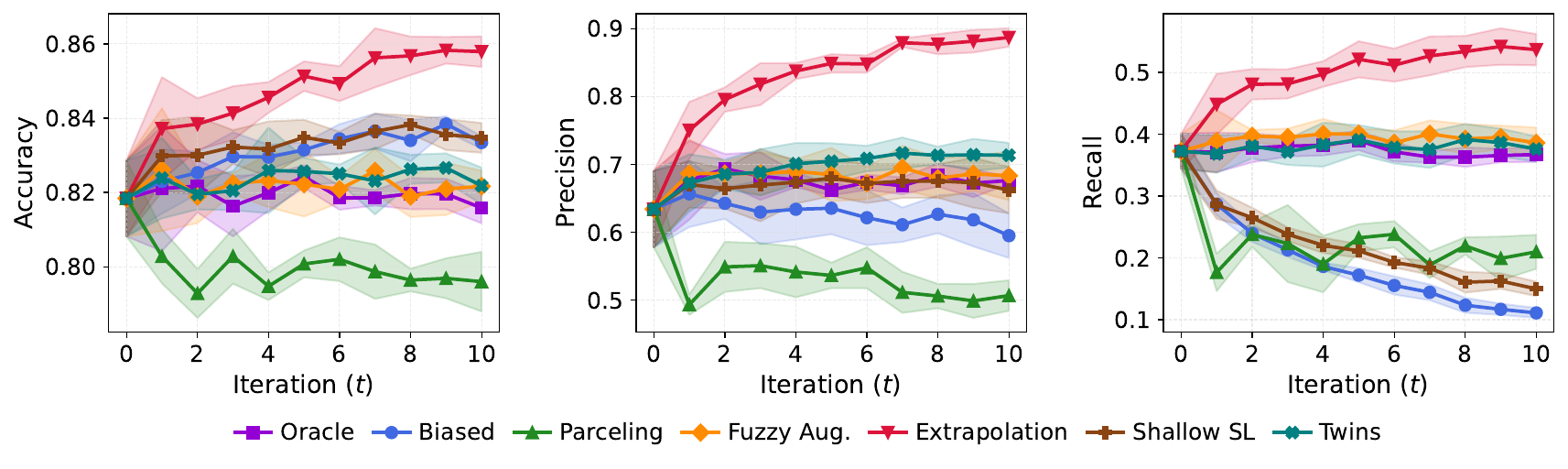}
    \caption{Accuracy, precision, and recall over 10 lending iterations for all RI strategies on the Default dataset (GBDT, $n_0{=}5{,}000$, $\tau{=}0.5$). Shaded regions indicate $\pm 1$ standard deviation across 5 runs. Extrapolation dominates all three standard metrics, yet this apparent superiority is an artifact of self-reinforcing label assignment (see Figure~\ref{fig:default-rate-kickout}).}
    \label{fig:ri-metrics}
\end{figure}

\noindent\textbf{Simple extrapolation dominates standard metrics.}
Extrapolation achieves the highest accuracy, precision, and recall across all six experimental configurations (Figure~\ref{fig:ri-metrics}; Table~\ref{tab:t10-summary}). On Default/GBDT, it reaches accuracy of approximately 0.86, precision of 0.90, and recall of 0.54 at $t{=}10$, substantially higher than all other strategies. The Kickout metric\revision{, our measure of rejection quality,} confirms that Extrapolation is rejecting true defaulters at a higher rate than any other strategy, reaching values around 0.80 on Default/GBDT (Figure~\ref{fig:default-rate-kickout}b).

\begin{figure}[t]
    \centering
    \includegraphics[width=\linewidth]{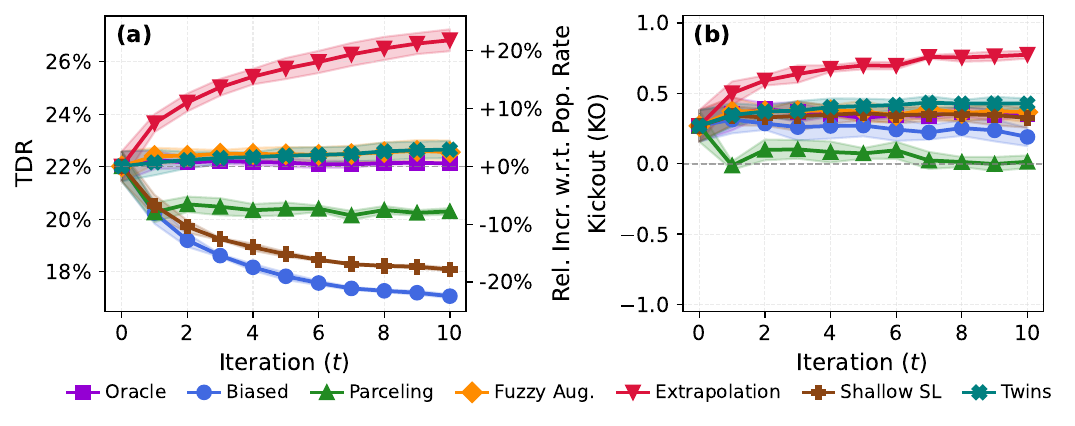}
    \caption{Training set default rate (a) and Kickout (b) over ten iterations (Default, GBDT). The dashed line indicates the population default rate (22.0\%). Extrapolation inflates the default rate by ${\sim}5\%$ above the population rate while Biased deflates it by ${\sim}5\%$---symmetric distortions with opposite mechanisms. %
    }
    \label{fig:default-rate-kickout}
\end{figure}

However, the training set default rate reveals the mechanism behind this apparent superiority (Figure~\ref{fig:default-rate-kickout}a). Extrapolation inflates the training set default rate well above the population rate, from 22.0\% to approximately 27\% on Default/GBDT, because it uses the model's own predictions as ground-truth labels for rejected applicants. This creates a self-reinforcing feedback loop: the model predicts that rejected applicants would default, labels them as defaulters, retrains on this augmented data, and becomes even more aggressive in its default predictions. Each iteration amplifies the previous one's biases. The model does not converge toward the true population distribution; it diverges from it at an accelerating rate.

\noindent\textbf{Middle tier: Fuzzy Augmentation and Twins.}
These two strategies maintain accuracy comparable to the Biased baseline (approximately 0.82 on Default/GBDT) while providing meaningfully better recall, around 0.38--0.40 at $t{=}10$ compared to 0.11 for Biased. Their training set default rates stay much closer to the population rate. Fuzzy Augmentation is the most stable: its default rate deviates by less than 1\% from the population rate across most configurations, making it the closest to Oracle in terms of population representativeness. Twins shows strong Kickout performance on the Default dataset (reaching approximately 0.45), indicating high-quality rejection decisions.

\noindent\textbf{Weak tier: Parceling and Shallow Self-Learning.}
These strategies provide only marginal improvements over the Biased baseline. Parceling achieves recall of 0.21 on Default/GBDT, better than Biased (0.11) but far below Fuzzy Augmentation (0.39) or Twins (0.38). Its precision drops to approximately 0.50, and its Kickout values hover near zero, occasionally turning negative (notably on PPDai/RF), meaning it sometimes rejects more non-defaulters than defaulters. The bin-based bad rate assumption underlying Parceling, \ie that rejects default at the same rate as accepts within each risk score bin, is increasingly violated under iterative retraining: as survival bias accumulates, the accept pool within each bin skews toward non-defaulters, causing Parceling to systematically underestimate the default rate among rejects.
Shallow Self-Learning achieves recall of only 0.15 on Default/GBDT, and its training set default rate drifts downward similarly to the Biased baseline (dropping to approximately 18\%), suggesting that the Isolation Forest filtering mechanism is too conservative: it excludes so many rejects that the method barely improves over doing nothing.

\noindent\textbf{Training set default rate as a diagnostic.}
The TDR reveals a striking pattern invisible to standard metrics (Figure~\ref{fig:default-rate-kickout}a). By $t{=}10$ on Default/GBDT, Extrapolation inflates the TDR to 26.8\% ($+22\%$ relative to the population rate), while Biased deflates it to 17.1\% ($-22\%$ relative), a nearly symmetric distortion in opposite directions. Extrapolation does not mitigate survival bias; it reverses its sign, replacing systematic underrepresentation of defaulters with systematic overrepresentation. Rather than correcting the missing-data problem, it substitutes one distributional distortion for another, casting doubt on whether it should be considered an RI strategy at all. To the best of our knowledge, this sign-reversed distortion has not been previously documented. Among the remaining strategies, Parceling (TDR 20.3\%, $-8\%$ relative) and Shallow Self-Learning (18.1\%, $-18\%$) offer only partial corrections. In contrast, Fuzzy Augmentation (22.6\%, $+3\%$) and Twins (22.7\%, $+3\%$) track the population rate nearly as closely as Oracle (22.1\%), making them the only strategies that genuinely mitigate survival bias effectively, preserving the training set distribution, a property that standard metrics fail to capture.

\begin{table}[t]
    \centering
    \caption{Accuracy and recall at $t{=}10$ for all RI strategies across all six configurations, averaged over five runs. Extrapolation achieves the highest accuracy and recall in every configuration despite having no access to true labels. Bold indicates the best value per column (excluding Oracle). The Gap row shows the accuracy difference between Extrapolation and Oracle, demonstrating that standard metrics structurally reward self-reinforcing strategies.}
    \label{tab:t10-summary}
    \small
    \setlength{\tabcolsep}{4pt}
    \begin{tabular}{l cc cc cc}
        \toprule
        & \multicolumn{2}{c}{Default} & \multicolumn{2}{c}{LendingClub} & \multicolumn{2}{c}{PPDai} \\
        \cmidrule(lr){2-3} \cmidrule(lr){4-5} \cmidrule(lr){6-7}
        Strategy & GBDT & RF & GBDT & RF & GBDT & RF \\
        \midrule
        \multicolumn{7}{l}{\textit{Accuracy}} \\
        \addlinespace
        Biased          & 0.833 & 0.833 & 0.783 & 0.782 & 0.785 & 0.786 \\
        Oracle          & 0.816 & 0.812 & 0.779 & 0.777 & 0.780 & 0.780 \\
        Parceling       & 0.796 & 0.794 & 0.778 & 0.777 & 0.775 & 0.777 \\
        Fuzzy Aug.      & 0.822 & 0.815 & 0.779 & 0.776 & 0.782 & 0.779 \\
        Extrapolation   & \textbf{0.858} & \textbf{0.863} & \textbf{0.796} & \textbf{0.790} & \textbf{0.796} & \textbf{0.801} \\
        Shallow SL      & 0.835 & 0.838 & 0.784 & 0.783 & 0.788 & 0.791 \\
        Twins           & 0.822 & 0.827 & 0.778 & 0.779 & 0.778 & 0.780 \\
        \addlinespace
        \rowcolor{gray!10}
        \textit{Gap (Extrap.--Oracle)} & \textit{+.042} & \textit{+.051} & \textit{+.017} & \textit{+.013} & \textit{+.016} & \textit{+.021} \\
        \addlinespace
        \midrule
        \multicolumn{7}{l}{\textit{Recall}} \\
        \addlinespace
        Biased          & 0.111 & 0.150 & 0.019 & 0.035 & 0.036 & 0.057 \\
        Oracle          & 0.367 & 0.371 & 0.067 & 0.061 & 0.102 & 0.125 \\
        Parceling       & 0.210 & 0.217 & 0.035 & 0.039 & 0.048 & 0.082 \\
        Fuzzy Aug.      & 0.386 & 0.404 & 0.065 & 0.063 & 0.107 & 0.138 \\
        Extrapolation   & \textbf{0.537} & \textbf{0.619} & \textbf{0.286} & \textbf{0.236} & \textbf{0.299} & \textbf{0.349} \\
        Shallow SL      & 0.150 & 0.189 & 0.040 & 0.048 & 0.047 & 0.075 \\
        Twins           & 0.376 & 0.418 & 0.032 & 0.045 & 0.048 & 0.098 \\
        \bottomrule
    \end{tabular}
    \vspace*{-0.4cm}
\end{table}

\revision{A practitioner evaluating strategies by accuracy, precision, recall, or even Kickout would rationally select Simple Extrapolation as the best strategy. However, all four metrics are evaluated on the pool of applicants the model approved --- a population composed exclusively of applicants the model classified as non-defaulters. As Extrapolation assigns $y=1$ to all rejected applicants\footnote{\revision{Since both models are deterministic at inference time and the threshold is fixed, re-applying $M_t$ to rejected applicants --- who received $\hat{y}=1$ in the first prediction --- yields $\hat{y}=1$ again, so all rejected applicants receive the imputed label $y=1$.} %
} and retrains on this augmented dataset, the training set default rate increases monotonically above the true population default rate (Figure~\ref{fig:default-rate-kickout}a), causing the model to classify an increasing fraction of applicants as defaulters at each cycle. The accepted pool therefore becomes increasingly composed of applicants the model classified with high confidence, and evaluating on this pool rewards the model for removing its own hard cases from consideration rather than for genuine improvement. The training set default rate is the only metric that escapes this circularity: it measures the composition of the training set against the known population default rate, a reference point the model cannot influence.}

\noindent\textbf{The Oracle Paradox.}
The most revealing finding emerges from the ranking itself. Across all six configurations, Extrapolation consistently outperforms Oracle on accuracy, precision, and recall (Table~\ref{tab:t10-summary}), despite having no access to true labels. On Default/GBDT, the gap is roughly 4.2\% in accuracy (0.858 vs.\ 0.816) and more than 20\% in precision (0.90 vs.\ 0.68). This result is not an artifact of noise: the standard deviation bands are non-overlapping across all configurations (Figure~\ref{fig:ri-metrics}).
The explanation lies in the evaluation framework itself. Because models are evaluated on the full applicant pool, the evaluation implicitly rewards strategies that create sharper, more extreme decision boundaries, while penalizing Oracle for introducing genuine label diversity that makes the learning task harder. %
The only metric on which strategies are evaluated correctly is the training set default rate: Oracle by design maintains a rate near the population rate (22.1\%), while Extrapolation inflates it to approximately 27\%.

This finding extends beyond the reject inference comparison. It demonstrates that standard classification metrics evaluated on the observable population are structurally inadequate for assessing model quality in the presence of survival bias. A practitioner selecting a reject inference strategy by standard evaluation metrics would choose Extrapolation, the strategy that most aggressively amplifies survival bias, and would conclude that their model is improving with each retraining cycle.

The magnitude of these effects varies with dataset scale. On LendingClub ($n_0{=}60{,}000$), all strategies exhibit the same qualitative patterns but with compressed magnitudes: accuracy differences between strategies narrow 
to ${\sim}2\%$ and recall degradation is less severe (Figure~\ref{fig:lendingclub-ri}, Appendix~\ref{app:ri-dataset-size}). This scale effect has practical significance. Smaller lenders and new product lines, where initial training data is more limited, are most vulnerable to the survival bias spiral documented here.

\section{Controlled Exploration as Bias Mitigation}
\label{sec:exploration}

The reject inference strategies evaluated in Section~\ref{sec:reject-inference} all attempt to \emph{impute} labels for rejected applicants using statistical assumptions about the missing data. Each introduces its own form of model-dependent bias, and none can recover the true outcome distribution without assumptions that are fundamentally untestable under selection.
We take a different approach inspired by the \emph{exploration--exploitation} tradeoff in sequential decision-making~\cite{lattimore2020bandit} and A/B or split testing. Rather than guessing at missing labels, the lender deliberately approves a fraction $r$ of rejected applicants and observes their true repayment outcomes. This \emph{controlled exploration}, governed by the parameter $r$, breaks the feedback loop at a measurable cost: the defaults incurred by explored applicants who fail to repay. The parameter $r$ interpolates between the two baselines from Section~\ref{subsec:baselines}: $r = 0$ corresponds to Biased (maximum exploitation) and $r = 1$ to Oracle (maximum exploration). The central question is: \emph{what is the cost of moving from one extreme to the other?}

Given a trained model at iteration $t$, let $\mathcal{R}_t$ denote the set of rejected applicants. The exploration strategy selects a subset $\mathcal{E}_t \subseteq \mathcal{R}_t$ with $|\mathcal{E}_t| = \lceil r \cdot |\mathcal{R}_t| \rceil$ for approval, then augments the training data with these applicants and their \emph{observed} (not imputed) labels. The selection of which rejects to explore is governed by a \textbf{sampling strategy}: \textbf{Least Risky} selects the $r$-fraction with lowest $P(\text{default})$, wanting to minimize cost by choosing the rejects closest to the decision boundary; \textbf{Most Risky} selects the $r$-fraction with highest $P(\text{default})$, probing the region where the model has the least information; and \textbf{Random} selects uniformly, providing an unbiased sample that requires no model scores.\footnote{An uncertainty-based strategy (minimizing $|P(\text{default}) - \tau|$) is equivalent to Least Risky when $\tau = 0.5$, since all rejects satisfy $P(\text{default}) > \tau$ and the two orderings coincide.} We vary $r \in \{0, 0.01, 0.02, 0.05, 0.10, 0.15, 0.20, 0.30, 0.50, 0.75, 1.0\}$ across all three datasets and both modeling techniques, for the three selection mechanisms with five independent runs per configuration. For legibility, the figures plot the subset $r \in \{0, 0.05, 0.10, 0.20, 0.50, 1.0\}$; all reported values use the full range.

\begin{figure}[t]
  \centering
  \begin{subfigure}[b]{0.49\linewidth}
    \centering
    \includegraphics[width=\linewidth]{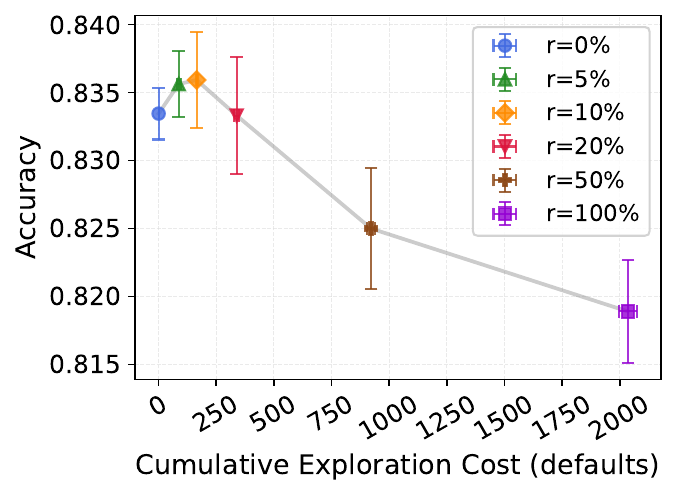}
    \caption{Accuracy vs.\ cumulative cost.}
    \label{fig:pareto-accuracy}
  \end{subfigure}
  \hfill
  \begin{subfigure}[b]{0.48\linewidth}
    \centering
    \includegraphics[width=\linewidth]{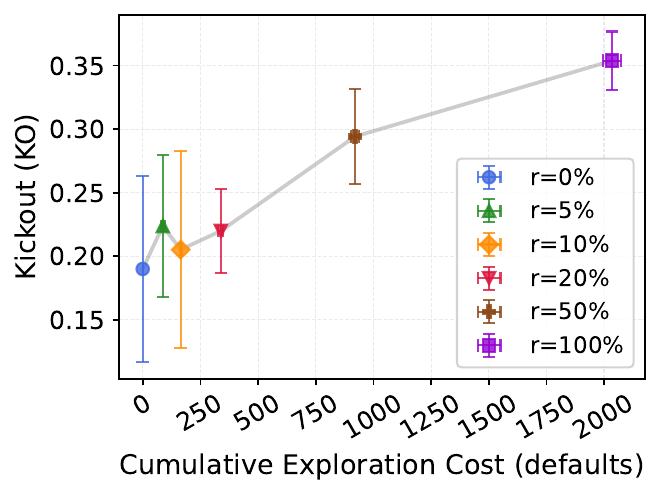}
    \caption{Kickout vs.\ cumulative cost.}
    \label{fig:pareto-kickout}
  \end{subfigure}
  \caption{Cost--benefit tradeoff on the Default dataset (GBDT, least risky sampling). Accuracy decreases monotonically with exploration while Kickout increases monotonically. The two metrics give opposite recommendations, confirming that accuracy under biased selection is a misleading objective. Error bars indicate $\pm 1$ standard deviation across 5 runs.}
  \label{fig:exploration-tradeoff}
\end{figure}

\subsection{The Cost--Benefit Tradeoff}
\label{sec:exploration-tradeoff}

Figure~\ref{fig:exploration-tradeoff} presents the core finding. Accuracy and Kickout move in opposite directions as exploration increases: on Default/GBDT under least risky sampling, accuracy drops from $0.833$ at $r = 0$ to $0.819$ at $r = 1.0$ ($-1.5\%$), while Kickout rises from $0.19$ to $0.35$ ($+16\%$). Recall that differences are absolute, in percentage points, unless stated as relative. The biased model achieves the highest accuracy in part because it is evaluated on a population shaped by its own selection decisions; as exploration shifts the training data toward the true population, the evaluation becomes harder but the model's rejection decisions become more aligned with actual risk.

\begin{table}[t]
\centering
\caption{Accuracy drop and Kickout at full exploration ($r = 1.0$, random sampling) relative to the biased baseline ($r = 0$), measured at the final iteration ($t = 10$) and averaged over runs. $\Delta$Acc and TDR gap are absolute differences; the KO column reports the Kickout value at $r = 1.0$. Relative TDR distortion is the TDR gap divided by the biased TDR.}
\label{tab:exploration-summary}
\small
\begin{tabular}{@{}llcccc@{}}
\toprule
Dataset & Model & $\Delta$Acc (\%) & KO at $r{=}1.0$ & TDR Gap (\%) & Rel.\ TDR Distortion \\
\midrule
Default     & GBDT & $-1.50$ & $0.344$ & $5.05$ & $29.6\%$ \\
Default     & RF   & $-1.91$ & $0.312$ & $5.09$ & $29.9\%$ \\
PPDai       & GBDT & $-0.53$ & $0.232$ & $1.17$ & $5.3\%$ \\
PPDai       & RF   & $-0.49$ & $0.190$ & $1.33$ & $6.2\%$ \\
LendingClub & GBDT & $-0.33$ & $0.150$ & $0.64$ & $2.9\%$ \\
LendingClub & RF   & $-0.35$ & $0.081$ & $0.57$ & $2.6\%$ \\
\bottomrule
\end{tabular}
\end{table}

This pattern is consistent in direction but varies in magnitude across configurations, measured under random sampling (Table~\ref{tab:exploration-summary}); comparisons for all other configurations are provided in Appendix~\ref{app:exploration}. Default exhibits the strongest effect (accuracy drops of $1.5\%$--$1.9\%$, Kickout gains of $15\%$--$23\%$), PPDai shows intermediate sensitivity ($0.49\%$--$0.53\%$ accuracy decline, $10\%$--$12\%$ Kickout gain), and LendingClub is the most attenuated ($0.33\%$--$0.35\%$ accuracy decline, with no systematic Kickout gain at any exploration rate). The TDR gap between $r = 0$ and $r = 1.0$ serves as a diagnostic for feedback loop severity: on Default, biased training makes the borrower population appear approximately 30\% safer than reality in relative terms ($\text{TDR} = 17.0\%$ vs.\ $22.1\%$), while on LendingClub the relative distortion is only $2.6\%$--$2.9\%$.

\subsection{The Diagnostic Zone}
\label{sec:diagnostic-zone}

A lender considering exploration faces a practical question: how much is enough? Even minimal exploration produces detectable shifts. At $r = 0.05$, accuracy changes are negligible across all configurations ($|\Delta| < 0.5\%$), while TDR already moves toward the population default rate on every configuration (Figure~\ref{fig:temporal-multi}).
Kickout responds at this rate on Default and on PPDai/GBDT, but is erratic across rates on PPDai/RF and on LendingClub fluctuates within a few points of its baseline without systematic improvement at any rate. TDR is therefore the more reliable diagnostic signal in the low-$r$ regime and on the largest dataset.
On Default, accuracy does not decline by more than $0.5\%$ until $r = 0.30$ (least risky) or $r = 0.50$--$0.75$ (most risky and random); on LendingClub the threshold is never reached, and on PPDai it is reached only at full exploration, in two of the six dataset–model–strategy combinations (Table~\ref{tab:sweet-spot}).

\begin{figure}[t]
  \centering
  \includegraphics[width=\linewidth]{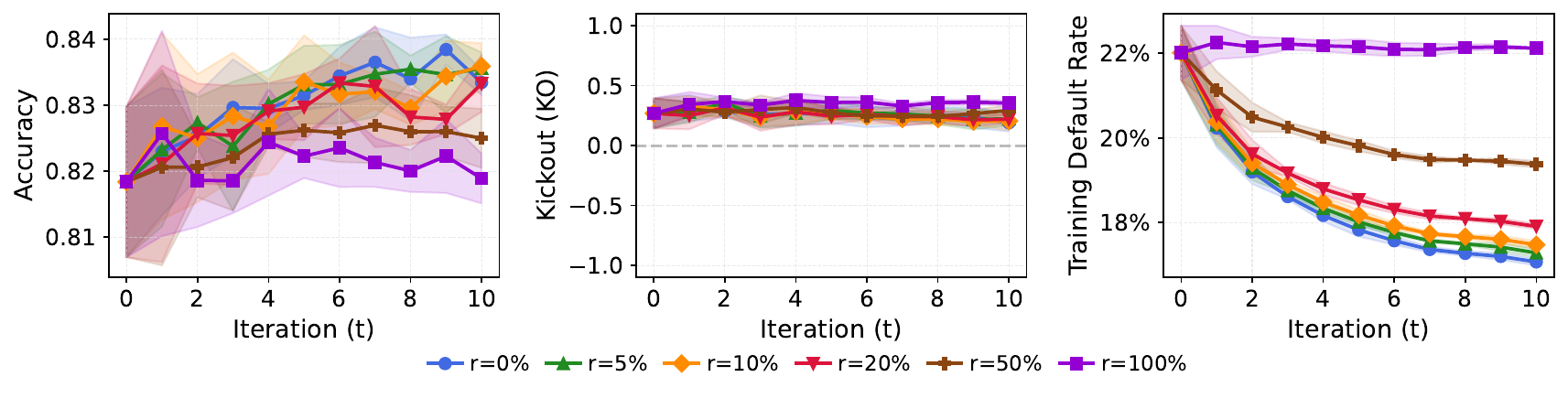}
  \caption{Temporal evolution of accuracy, Kickout, and TDR across iterations on the Default dataset (GBDT, least risky sampling). Higher exploration rates shift TDR toward the population default rate (${\sim}22\%$), while accuracy decreases and Kickout increases, consistent with learning from a more representative training distribution.}
  \label{fig:temporal-multi}
\end{figure}

\begin{table}[t]
\centering
\caption{Smallest exploration rate in our grid at which accuracy declines by more than $0.5\%$ in absolute terms below the biased baseline ($r = 0$). Accuracy is evaluated at the final iteration ($t = 10$) and averaged over five runs. ``Never'' indicates that even at $r = 1.0$ this threshold is not crossed.}
\label{tab:sweet-spot}
\small
\begin{tabular}{@{}llccc@{}}
\toprule
Dataset & Model & Least Risky & Most Risky & Random \\
\midrule
Default     & GBDT & $0.30$ & $0.75$ & $0.75$ \\
Default     & RF   & $0.30$ & $0.50$ & $0.50$ \\
PPDai       & GBDT & never  & never  & $1.00$ \\
PPDai       & RF   & never  & $1.00$ & never \\
LendingClub & GBDT & never  & never  & never \\
LendingClub & RF   & never  & never  & never \\
\bottomrule
\end{tabular}
\end{table}

This defines a \emph{diagnostic zone} at $r \in [0.01, 0.05]$: exploration rates that reveal the presence and severity of survival bias at near-zero cost. In practice, a bank could run a controlled experiment at $r = 0.02$--$0.05$ purely as a diagnostic tool, without committing to permanent policy changes. On the Default dataset, $r = 0.02$ incurs approximately $45$ defaults across all iterations under random sampling (36 under least risky, 54 under most risky), but the information gained is substantial: the bank learns whether its model is operating on a representative sample or on a fiction of its own creation.

\begin{figure}[t]
  \centering
  \begin{subfigure}[b]{0.48\linewidth}
    \centering
    \includegraphics[width=\linewidth]{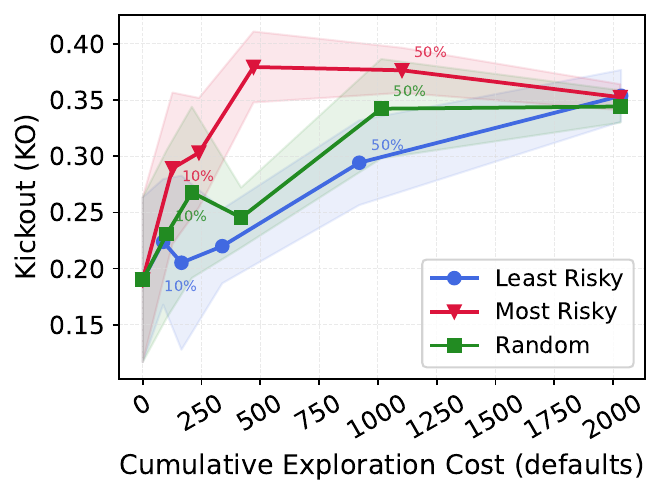}
    \caption{Kickout vs.\ cumulative cost.}
    \label{fig:ranking-kickout}
  \end{subfigure}
  \hfill
  \begin{subfigure}[b]{0.48\linewidth}
    \centering
    \includegraphics[width=\linewidth]{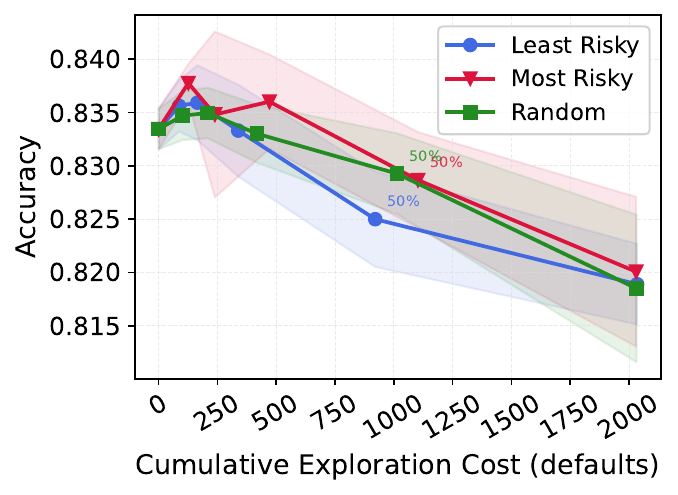}
    \caption{Accuracy vs.\ cumulative cost.}
    \label{fig:ranking-accuracy}
  \end{subfigure}
  \caption{Sampling strategy comparison on the Default dataset (GBDT). Most Risky achieves the highest Kickout at every intermediate exploration rate and preserves accuracy better than Least Risky despite higher per-applicant cost. Random is a robust middle ground.}
  \label{fig:ranking-comparison}
\end{figure}

\subsection{Sampling Strategy Comparison}
\label{sec:ranking-comparison}

Figure~\ref{fig:ranking-comparison} compares the three sampling strategies. On Default/GBDT, Most Risky exploration produces the highest Kickout at every exploration rate in our grid with $0.05\leq r\leq 0.75$,\footnote{At $r=0$ and $r=1$ the strategies are equivalent by construction, since no rejects and all rejects are explored respectively and the ranking is never consulted.} by margins of $3.4\%$--$13.4\%$ over Random: at $r = 0.20$, it achieves $0.379$ compared to $0.245$ (Random) and $0.220$ (Least Risky), a difference of $+13.4\%$.
This is intuitive: the rejects the model considers highest-risk are the most underrepresented in biased training data, and observing their true outcomes provides the strongest learning signal. Most Risky also preserves accuracy better than expected, maintaining positive accuracy deltas through $r = 0.20$ on Default/GBDT. Least Risky is dominated: despite its appeal as the lowest-cost option, it produces the smallest Kickout improvements because exploring ``safe'' rejects provides redundant rather than informative training signal. Random requires no model scores, avoids systematic exploration bias, and achieves Kickout between the other two strategies, making it the practical default for deployment.

\begin{table}[t]
\centering
\caption{Default rate among explored rejects at $r = 0.10$ by sampling strategy. The gradient from ${\sim}50\%$ (Least Risky) to ${\sim}80\%$ (Most Risky) confirms that the model's risk ranking is meaningful even among rejected applicants. However, all rates far exceed the biased TDR (${\sim}17$--$22\%$), indicating that the model cannot calibrate its estimates in the reject region without observing true outcomes.}
\label{tab:exploration-default-rates}
\small
\begin{tabular}{@{}llccc@{}}
\toprule
Dataset & Model & Least Risky & Random & Most Risky \\
\midrule
Default     & GBDT & $50.5\%$ & $65.6\%$ & $82.0\%$ \\
Default     & RF   & $52.6\%$ & $57.0\%$ & $82.1\%$ \\
PPDai       & GBDT & $56.8\%$ & $66.5\%$ & $72.3\%$ \\
PPDai       & RF   & $46.4\%$ & $65.4\%$ & $79.2\%$ \\
LendingClub & GBDT & $53.7\%$ & $58.8\%$ & $66.4\%$ \\
LendingClub & RF   & $50.3\%$ & $54.6\%$ & $60.5\%$ \\
\bottomrule
\end{tabular}
\end{table}

The default rates among explored rejects (Table~\ref{tab:exploration-default-rates}) reveal that the model's quality has two distinct dimensions. The gradient from ${\sim}50\%$ (Least Risky) to ${\sim}80\%$ (Most Risky) confirms that the model's \emph{ranking} of applicants by risk is informative: it captures genuine signal about relative default likelihood, even among applicants it was trained to reject. What survival bias distorts is the model's \emph{learned risk estimates} in the reject region. The model rejects all of these applicants (all receive $\hat{p} > \tau$), yet their true default rates range widely; some are substantially less risky than others and may be comparable to approved borrowers. Without observing their outcomes, the model has no basis for distinguishing among them. Controlled exploration preserves the model's ranking ability while providing the data needed to sharpen its estimates where they matter most.

\section{Conclusions}
\label{sec:conclusions}
The cost--benefit tradeoff encapsulates the central tension of our research. A bank that evaluates its scoring model by standard metrics will rationally choose $r=0$, because exploration can only decrease accuracy on the biased test distribution. This is the optimal strategy under the metric the bank is optimizing, and the worst possible strategy for learning.
The practical recommendation is modest. A short-term diagnostic experiment at $r = 0.02$--$0.05$, using random selection among rejected applicants, is sufficient to estimate the TDR gap and quantify the severity of the feedback loop. The heterogeneity across datasets reinforces this: Default, where the feedback loop distorts TDR by ${\sim}30\%$ in relative terms, represents a scenario where survival bias is a first-order concern; LendingClub, with only $2.6\%$--$2.9\%$ relative distortion, is a setting where survival bias may be a secondary modeling concern. 
A diagnostic exploration experiment allows the lender to determine which regime it occupies \emph{before} investing in complex reject inference methods or trusting that its current model is performing well.
The reject inference comparison reveals three results with direct implications for credit scoring practice. First, no reject inference strategy fully eliminates survival bias in the temporal setting; even the most effective methods that preserve training set representativeness (Fuzzy Augmentation, Twins) only slow the accumulation rather than eliminating it. Second, Simple Extrapolation does not mitigate survival bias but reverses its sign: where the Biased baseline deflates the TDR by 22\% relative to the population rate, Extrapolation inflates it by a symmetric 22\%, substituting one distributional distortion for another. Third, the fact that Extrapolation outperforms Oracle on standard metrics demonstrates that these metrics are not merely noisy but systematically biased toward strategies that amplify selection effects. Taken together, these findings reinforce the central argument of this paper: iterative model-based lending produces performance indicators that are fundamentally disconnected from the model's actual predictive quality for the full applicant population. 

\revision{\noindent\textbf{Limitations.} Our simulation assumes a stationary applicant distribution; in practice, population drift would interact with the survival bias feedback loop in ways we do not model. The fixed threshold $\tau=0.5$ is a simplifying assumption; real deployments tune this threshold to reflect the asymmetric costs of approving defaulters versus rejecting creditworthy applicants, and threshold choice would affect the rate of feedback loop accumulation. Our findings are established for tree-based ensemble methods (GBDT and RF); models with different inductive biases may exhibit different magnitudes, though we expect the structural failure mode to persist as it follows from survival bias rather than from any property specific to tree-based models. Finally, controlled exploration, while assumption-free in principle, faces deployment constraints in practice: deliberately approving a fraction of rejected applicants raises regulatory and ethical considerations that would need to be addressed before implementation.}

In future work, we plan to quantify the social and economic costs of survival bias, and to study if and how the feedback loop documented in this work creates demographic disparities in the rejected applicant pool.

\bibliographystyle{splncs04}
\bibliography{bibliography_aies25,bibliography_ecml26}

\newpage
\appendix

\section{Data Preprocessing}
\phantomsection
\label{app:preprocessing}

For the \textbf{Default} dataset, no additional preprocessing was required beyond standard encoding of categorical features.

For the \textbf{PPDai} dataset, we down-sampled the majority class from the original dataset (55,596 rows, 87.1\% non-default) to 31,389 rows with a 77.0\% imbalance ratio, approximately matching the Default dataset's class distribution.

For the \textbf{LendingClub} dataset, preprocessing involved several stages. We first filtered to loans originated in 2017, then discarded the 10 columns with the highest null counts (all exceeding 314,000 nulls) and dropped remaining rows containing nulls. We constructed a binary default label from the \texttt{loan\_status} field, treating ``Default,'' ``Charged Off,'' ``Late (31--120 days),'' ``Late (16--30 days),'' and ``Does not meet the credit policy. Status:Charged Off'' as positive (default) cases. We then removed 40 columns containing post-hoc information that would constitute data leakage, including payment outcomes (\texttt{total\_pymnt}, \texttt{total\_rec\_int}, \texttt{last\_pymnt\_amnt}), recovery fields (\texttt{recoveries}, \texttt{collection\_recovery\_fee}), post-origination credit pulls (\texttt{last\_fico\_range\_high/low}), current delinquency indicators (\texttt{acc\_now\_delinq}, \texttt{num\_tl\_30dpd}), and hardship/settlement flags. The resulting dataset was down-sampled to 302,595 rows with a 77\% imbalance ratio. 

For all datasets, remaining rows with missing values were dropped, categorical features were encoded as integer codes, and all features
were standardized using \texttt{StandardScaler} prior to model training.

\section{Model Selection}
\phantomsection
\label{app:model-selection}

We evaluated five binary classifiers on all three datasets using 80/20 train-test splits, averaging performance over 10 runs. Tables~\ref{tab:app-perf-default} and~\ref{tab:app-perf-ppdai-lc} report accuracy, precision, and recall for each model. Based on consistent top-tier performance across datasets, we selected Gradient Boosted Decision Trees (GBDT) and Random Forest (RF) for all subsequent experiments.

\begin{table}[ht]
    \centering
    \caption{Model performance on the Default dataset.}
    \label{tab:app-perf-default}
    \small
    \begin{tabular}{lccc}
        \toprule
        Model & Accuracy & Precision & Recall \\
        \midrule
        GBDT                & 0.820 & 0.672 & 0.365 \\
        Random Forest       & 0.814 & 0.636 & 0.369 \\
        Logistic Regression & 0.809 & 0.713 & 0.231 \\
        Decision Tree       & 0.807 & 0.605 & 0.371 \\
        MLP                 & 0.796 & 0.558 & 0.369 \\
        \bottomrule
    \end{tabular}
\end{table}

\begin{table}[ht]
    \centering
    \caption{Model performance on PPDai and LendingClub.}
    \label{tab:app-perf-ppdai-lc}
    \small
    \begin{tabular}{lccc}
        \toprule
        \multicolumn{4}{l}{\textit{PPDai}} \\
        \addlinespace
        Model & Accuracy & Precision & Recall \\
        \midrule
        GBDT                & 0.779 & 0.627 & 0.099 \\
        Random Forest       & 0.778 & 0.587 & 0.120 \\
        Logistic Regression & 0.773 & 0.578 & 0.047 \\
        Decision Tree       & 0.767 & 0.479 & 0.124 \\
        MLP                 & 0.723 & 0.351 & 0.241 \\
        \addlinespace
        \midrule
        \multicolumn{4}{l}{\textit{LendingClub}} \\
        \addlinespace
        Model & Accuracy & Precision & Recall \\
        \midrule
        GBDT                & 0.780 & 0.578 & 0.067 \\
        Random Forest       & 0.778 & 0.548 & 0.062 \\
        Logistic Regression & 0.779 & 0.536 & 0.092 \\
        MLP                 & 0.777 & 0.528 & 0.058 \\
        Decision Tree       & 0.772 & 0.452 & 0.088 \\
        \bottomrule
    \end{tabular}
\end{table}

\section{Initial Training Set Size}
\phantomsection
\label{app:nzero}

We determine the initial training set size $n_0$ via learning curve analysis. For each dataset, we train models at increasing sample sizes $\{k, 2k, \dots, n\}$ with step size $k$ and identify the point at which accuracy stabilizes. Figure~\ref{fig:app-n0-default} illustrates this procedure for the Default dataset. This yields $n_0 = 5{,}000$ for Default and PPDai, and $n_0 = 60{,}000$ for LendingClub.

\begin{figure}[h]
    \centering
    \includegraphics[scale=.4]{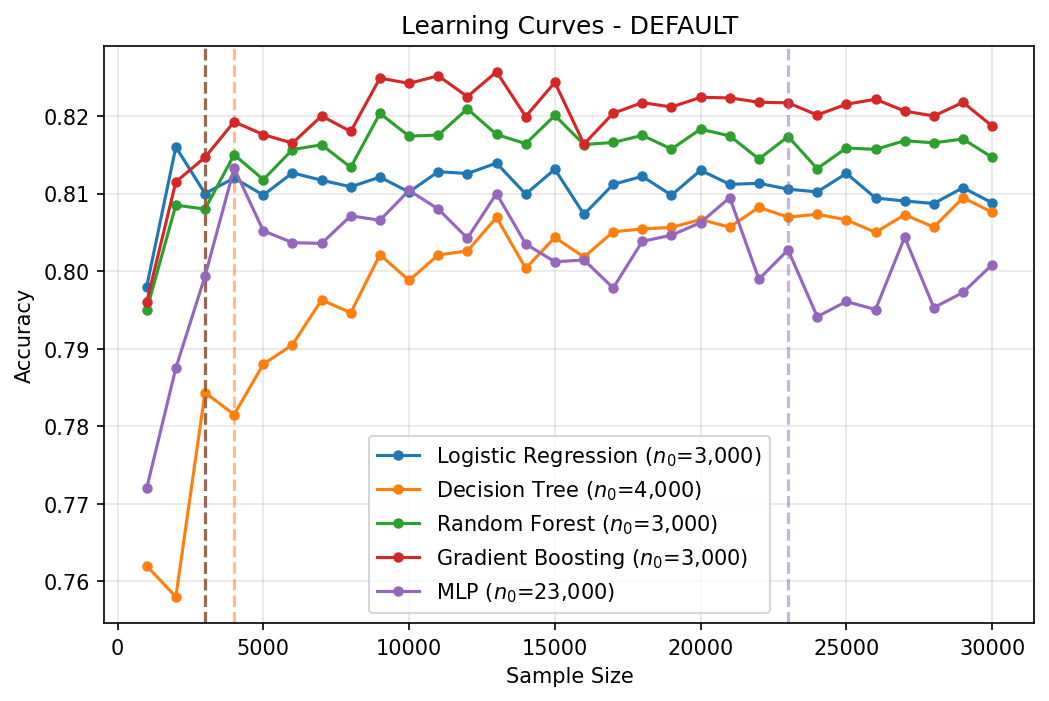}
    \caption{Learning curves for the Default dataset. Both RF and GBDT models stabilize around $n_0 = 5{,}000$.}
    \label{fig:app-n0-default}
\end{figure}

\section{\revision{Theoretical Analysis of the Accuracy--Recall Paradox: Deferred Proofs}}
\phantomsection %
\label{app:theory}

This appendix contains the proofs of the results stated in Section~\ref{subsec:theory}.
Throughout, assumptions A1--A6 and the notation ($\pi$, $\pi_t$, $\mathrm{prec}_t$, $\beta_t$, $\alpha_t$, $\mathcal{D}$, $\mathcal{D}_t$) are as defined there. The proof of Proposition~\ref{prop:decomp} relies on the following metric definitions.

\paragraph{Metrics.}

With $\mathrm{TP}, \mathrm{FP}, \mathrm{FN}, \mathrm{TN}$ the confusion-matrix counts (where $y=1$ denotes the default class), the metrics used in the proofs are probabilities estimated by the corresponding empirical frequencies:
\[
\begin{aligned}
\mathrm{acc} &= P(\hat{y} = y) &&= \frac{\mathrm{TP}+\mathrm{TN}}{\mathrm{TP}+\mathrm{TN}+\mathrm{FP}+\mathrm{FN}}, \\[2pt]
\mathrm{rec} &= P(\hat{y} = 1 \mid y = 1) &&= \frac{\mathrm{TP}}{\mathrm{TP}+\mathrm{FN}}, \\[2pt]
\mathrm{spec} &= P(\hat{y} = 0 \mid y = 0) &&= \frac{\mathrm{TN}}{\mathrm{TN}+\mathrm{FP}}, \\[2pt]
\mathrm{prec} &= P(y = 1 \mid \hat{y} = 1) &&= \frac{\mathrm{TP}}{\mathrm{TP}+\mathrm{FP}}.
\end{aligned}
\]

\begin{proof}[Lemma~\ref{lem:cleaning}]
By the law of total probability, conditioning on the prediction $\hat{y}$,
\[
\begin{aligned}
\pi \;=\; P_{\mathcal{D}}(y=1)
&\;=\; \underbrace{P_{\mathcal{D}}(\hat{y}=1)}_{\alpha_t}\,\underbrace{P_{\mathcal{D}}(y=1 \mid \hat{y}=1)}_{\mathrm{prec}_t}
\;+\; \underbrace{P_{\mathcal{D}}(\hat{y}=0)}_{1-\alpha_t}\,\underbrace{P_{\mathcal{D}}(y=1 \mid \hat{y}=0)}_{\beta_t} \\[2pt]
&\;=\; \alpha_t\,\mathrm{prec}_t + (1-\alpha_t)\,\beta_t .
\end{aligned}
\]
Solving for $\beta_t$ and subtracting $\pi$,
\[
\beta_t - \pi \;=\; \frac{\pi - \alpha_t\,\mathrm{prec}_t}{1-\alpha_t} - \pi \;=\; \frac{\alpha_t\,(\pi - \mathrm{prec}_t)}{1-\alpha_t}.
\]
By A4, $\alpha_t \in (0,1)$, so $\alpha_t > 0$ and $1-\alpha_t > 0$; by A5, $\pi - \mathrm{prec}_t < 0$. Hence $\beta_t - \pi < 0$, i.e.\ $\beta_t < \pi$.
\end{proof}

\begin{proof}[Proposition~\ref{prop:onecycle}]
By the Biased recursion (Section~\ref{subsec:baselines}), $\mathcal{D}_1 = \mathcal{D}_0 \cup \{\mathbf{x}_i \in \mathcal{S}_1 : \hat{y}_i = 0\}$: the initial training set $\mathcal{D}_0$ together with the applicants accepted from the first batch of arrivals $\mathcal{S}_1$. Let $N_0 = |\mathcal{D}_0|$ and $m_1 = |\{\mathbf{x}_i \in \mathcal{S}_1 : \hat{y}_i = 0\}| > 0$ (A4) denote their sizes; $\beta_1$ is the default rate of the accepted subset as defined in Lemma~\ref{lem:cleaning}. The pool default rate is the convex combination
\[
\pi_1 \;=\; \frac{N_0\,\pi_0 + m_1\,\beta_1}{N_0 + m_1},
\]
with strictly positive weights. By A6, $\pi_0 = \pi$, and by Lemma~\ref{lem:cleaning}, $\beta_1 < \pi$. A convex combination of $\pi$ and a value strictly below $\pi$, with positive weight on the latter, is strictly below $\pi$. Hence $\pi_1 < \pi = \pi_0$.
\end{proof}

\begin{proof}[Proposition~\ref{prop:decomp}]
Partition the accepted pool by the true label and apply the law of total probability, with $P_{\mathcal{D}_t}(y=1) = \pi_t$:
\[
\begin{aligned}
\mathrm{acc}_t \;=\; P_{\mathcal{D}_t}(\hat{y}=y)
&\;=\; P_{\mathcal{D}_t}(\hat{y}=0 \mid y=0)\,(1-\pi_t) \;+\; P_{\mathcal{D}_t}(\hat{y}=1 \mid y=1)\,\pi_t \\[2pt]
&\;=\; (1-\pi_t)\,\mathrm{spec}_t \;+\; \pi_t\,\mathrm{rec}_t.
\end{aligned}
\]
The contribution of recall is the term $\pi_t\,\mathrm{rec}_t \le \pi_t \to 0$, which establishes the stated sensitivity.
\end{proof}

We now connect the formal results above to the multi-cycle empirical dynamics reported in Section~\ref{subsec:baselines} using the recursion governing $\pi_t$ across cycles.
Lemma~\ref{lem:cleaning} gives $\beta_t < \pi$ at every cycle. Each cycle adds a disjoint accepted batch (size $m_t$, default rate $\beta_t$) to the pool, so defaulter counts add and the pool rate is the size-weighted average
\[
\pi_t = \frac{|\mathcal{D}_{t-1}|\,\pi_{t-1} + m_t\,\beta_t}{|\mathcal{D}_{t-1}| + m_t},
\qquad\text{thus}\qquad
\pi_t - \pi_{t-1} = \frac{m_t\,(\beta_t - \pi_{t-1})}{|\mathcal{D}_{t-1}| + m_t}.
\]
The right-hand form shows the pool deflates further ($\pi_t - \pi_{t-1} < 0$) exactly when the accepted batch has a lower default rate than the pool it joins ($\beta_t < \pi_{t-1}$), and is unchanged ($\pi_t = \pi_{t-1}$) when $\beta_t = \pi_{t-1}$. At $t=1$ the requirement is $\beta_1 < \pi$, supplied by Lemma~\ref{lem:cleaning} (Proposition~\ref{prop:onecycle}); from $t=2$ onward, once $\pi_{t-1} < \pi$, it becomes the stricter $\beta_t < \pi_{t-1}$ --- the accepted batch must fall below the already-deflated pool, not merely below the population. The experiments resolve whether this holds. Empirically (Figure~\ref{fig:default-rate-kickout}a) $\pi_t$ decreases monotonically from the population rate to roughly $17\%$ by $t=10$ on Default/GBDT.
In this case, as the pool is depleted of defaulters, the retrained model increasingly predicts non-default,  which empirically drives $\mathrm{rec}_t$ downward. By Proposition~\ref{prop:decomp}, recall contributes to accuracy only through $\pi_t\,\mathrm{rec}_t$, whose weight $\pi_t$ shrinks with the pool, so accuracy is largely decoupled from recall and can stay high as recall collapses (Figure~\ref{fig:biased-oracle}).
The multi-cycle decrease of $\pi_t$ and the observed trends in accuracy and recall are reported as empirical findings in Section~\ref{subsec:baselines}.

\section{Reject Inference: Additional Experimental Results}
\label{app:ri-additional-results}

This appendix reports the full set of results for all dataset--model configurations not presented in the main text. The qualitative patterns described in Section~\ref{sec:reject-inference}: Extrapolation's metric dominance, the middle-tier behavior of Fuzzy Augmentation and Twins, the underperformance of Parceling and Shallow Self-Learning, and the Oracle Paradox, are consistent across all configurations. We organize results by dataset, presenting each model family (GBDT, Random Forest) side by side.

\begin{figure}[h]
    \centering
    \includegraphics[width=\textwidth]{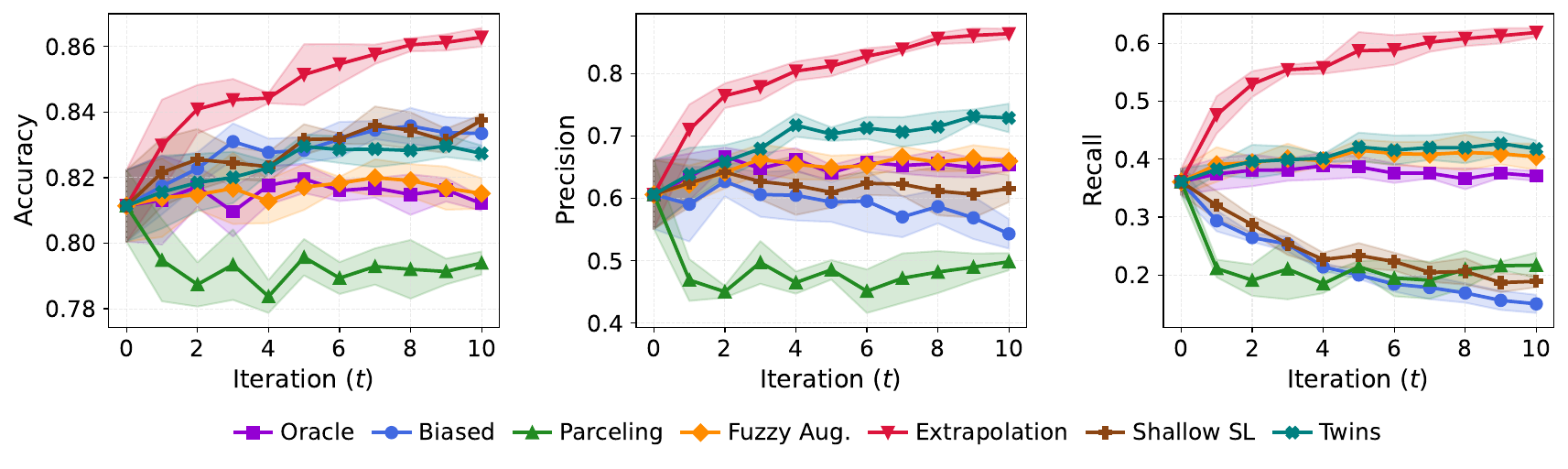}
    \caption{Accuracy, precision, and recall for all RI strategies on Default/RF ($n_0{=}5{,}000$, $\tau{=}0.5$). The same tier structure emerges as with GBDT (Figure~\ref{fig:ri-metrics}): Extrapolation dominates standard metrics, the middle tier (Fuzzy, Twins) clusters together, and Parceling and Shallow SL underperform.}
    \label{fig:app-default-rf-metrics}
\end{figure}

\subsection{Default --- Random Forest}
\label{app:default-rf}

Figure~\ref{fig:app-default-rf-metrics} shows the accuracy, precision, and recall trajectories for Default/RF. The same tier structure emerges as with GBDT: Extrapolation dominates standard metrics while Fuzzy Augmentation and Twins form a middle tier. The training set default rate and Kickout (Figure~\ref{fig:app-default-rf-dr-kickout}) confirm that Extrapolation's apparent superiority comes at the cost of inflating the default rate above the population rate.

\begin{figure}
    \centering
    \includegraphics[width=\linewidth]{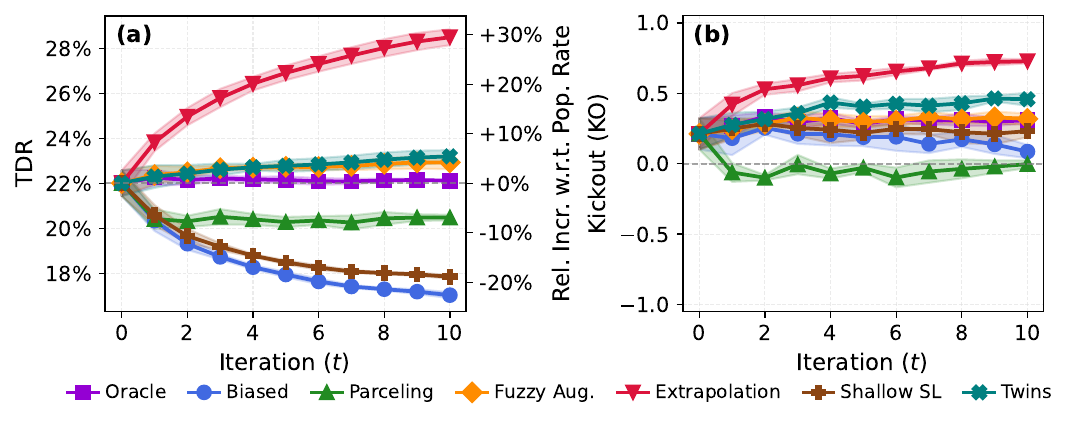}
    \caption{Training set default rate and Kickout for Default/RF. Extrapolation inflates the default rate above the population rate (dashed line) while Biased deflates it, mirroring the GBDT results (Figure~\ref{fig:default-rate-kickout}).}
    \label{fig:app-default-rf-dr-kickout}
\end{figure}

\subsection{LendingClub --- Additional Plots}
\label{app:lendingclub-gbdt-extra}

Figure~\ref{fig:app-lendingclub-gbdt-dr-kickout} reports the training set default rate and Kickout for LendingClub/GBDT. The same patterns hold as on Default, with compressed magnitudes due to the larger initial training set; the corresponding accuracy, precision, and recall comparison appears in Appendix~\ref{app:ri-dataset-size} (Figure~\ref{fig:lendingclub-ri}).

\begin{figure}[t]
    \centering
    \includegraphics[width=\linewidth]{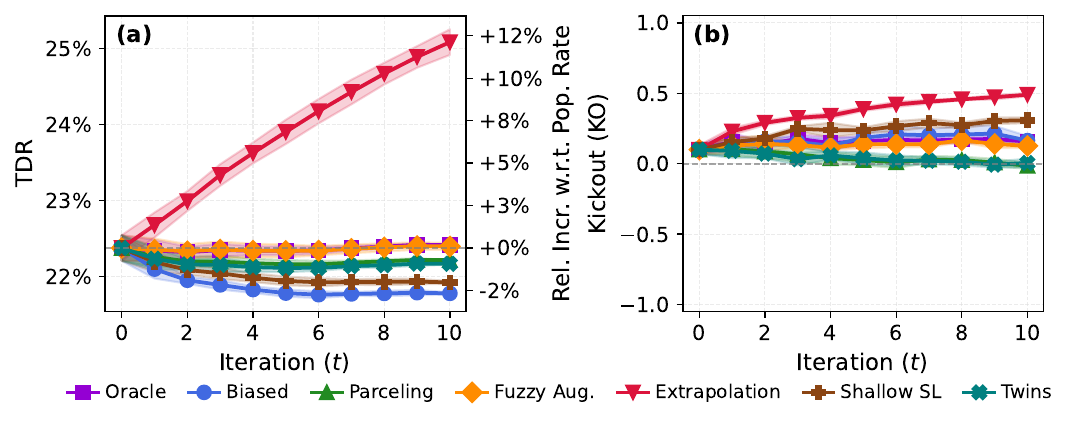}
    \caption{Training set default rate and Kickout for LendingClub/GBDT ($n_0{=}60{,}000$). The same patterns hold as on Default, but with compressed magnitudes due to the larger initial training set.}
    \label{fig:app-lendingclub-gbdt-dr-kickout}
\end{figure}

\subsection{LendingClub --- Random Forest}
\label{app:lendingclub-rf}

Figure~\ref{fig:app-lendingclub-rf-metrics} shows the full RI strategy comparison for LendingClub/RF. The patterns are consistent with the GBDT results, with similarly compressed magnitudes. The default rate and Kickout trajectories (Figure~\ref{fig:app-lendingclub-rf-dr-kickout}) confirm that Extrapolation inflates the training default rate while Biased deflates it.

\begin{figure}[t]
    \centering
    \includegraphics[width=\textwidth]{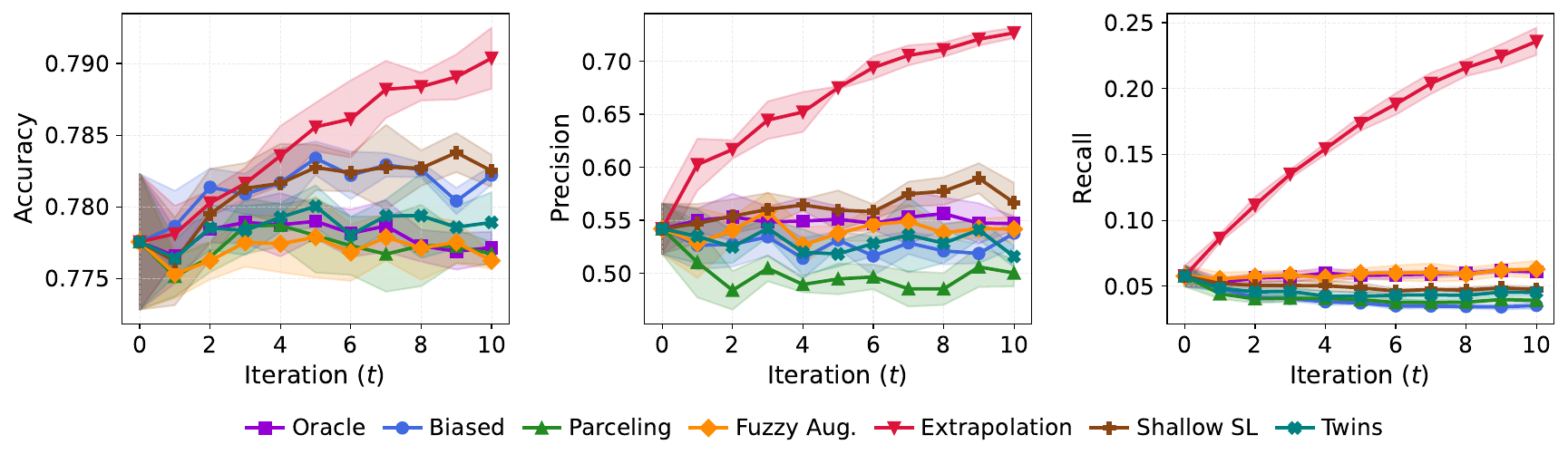}
    \caption{Accuracy, precision, and recall for all RI strategies on LendingClub/RF ($n_0{=}60{,}000$, $\tau{=}0.5$).}
    \label{fig:app-lendingclub-rf-metrics}
\end{figure}

\begin{figure}
    \centering
    \includegraphics[width=\linewidth]{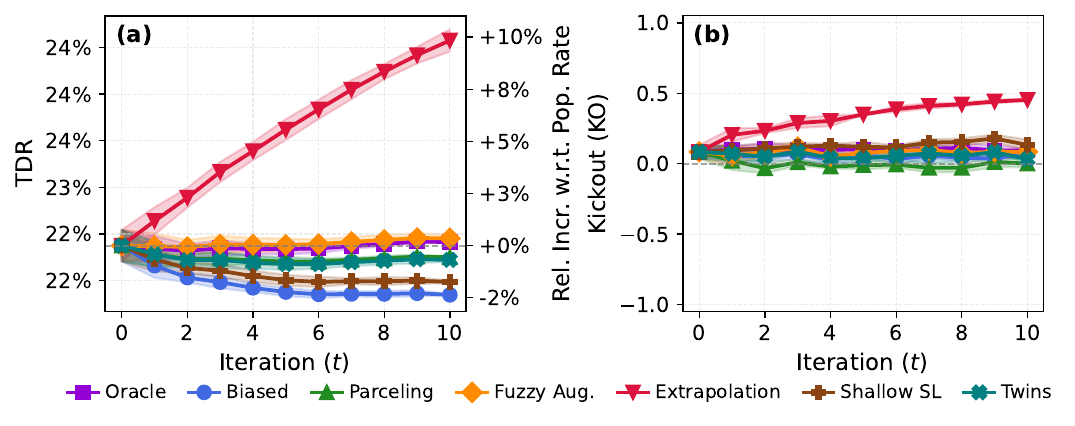}
    \caption{Training set default rate and Kickout for LendingClub/RF.}
    \label{fig:app-lendingclub-rf-dr-kickout}
\end{figure}

\subsection{Effect of Dataset Size}
\label{app:ri-dataset-size}

The magnitude of survival bias effects varies systematically with dataset size (Figure~\ref{fig:lendingclub-ri}). On LendingClub ($n_0{=}60{,}000$), all effects are more compressed: Biased recall drops to approximately $0.02$--$0.04$ but the absolute accuracy differences between strategies are small ($0.777$--$0.796$). On Default and PPDai ($n_0{=}5{,}000$), the same patterns emerge with larger magnitudes: Biased recall falls to $0.04$--$0.15$, and the accuracy spread between Extrapolation and other strategies reaches $2$--$4\%$. This scale effect has practical significance. Larger lenders with extensive historical portfolios experience slower bias accumulation and may not detect the problem within a small number of retraining cycles. Smaller lenders, niche market segments, and new product lines, where initial training data is more limited, are most vulnerable to the survival bias spiral. The temporal dynamics documented in Section~\ref{sec:reject-inference} would manifest faster and with greater severity in precisely those contexts.

\begin{figure}[t]
    \centering
    \includegraphics[width=\textwidth]{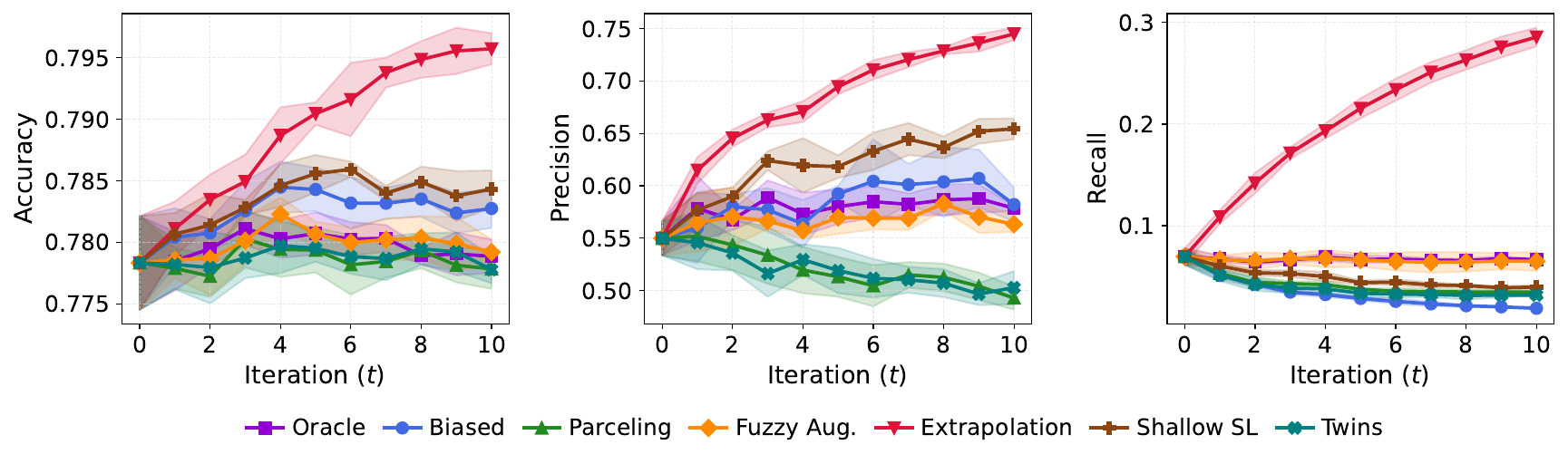}
    \caption{RI strategy comparison on LendingClub (GBDT, $n_0{=}60{,}000$, $\tau{=}0.5$). The same qualitative patterns emerge as on Default (Figure~\ref{fig:ri-metrics}), but with compressed magnitudes: accuracy differences between strategies are smaller ($0.775$--$0.795$ range) and recall degradation is less severe. Larger initial training sets buffer against survival bias accumulation.}
    \label{fig:lendingclub-ri}
\end{figure}

\subsection{PPDai --- GBDT}
\label{app:ppdai-gbdt}

Figure~\ref{fig:app-ppdai-gbdt-metrics} shows the RI strategy comparison for PPDai/GBDT. The qualitative patterns match those observed on Default, with Extrapolation dominating standard metrics and the middle tier (Fuzzy, Twins) providing the best balance of recall improvement and population representativeness. The default rate and Kickout plots (Figure~\ref{fig:app-ppdai-gbdt-dr-kickout}) show Extrapolation inflating the default rate above the population rate of 23.0\%.

\begin{figure}[t]
    \centering
    \includegraphics[width=\textwidth]{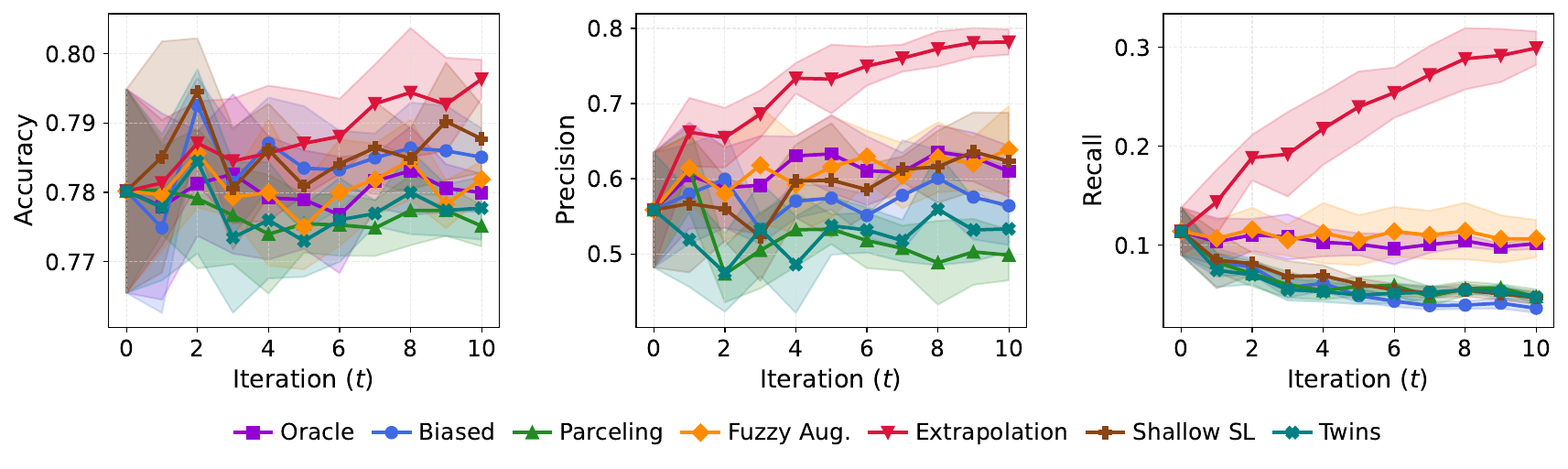}
    \caption{Accuracy, precision, and recall for all RI strategies on PPDai/GBDT ($n_0{=}5{,}000$, $\tau{=}0.5$).}
    \label{fig:app-ppdai-gbdt-metrics}
\end{figure}

\begin{figure}
    \centering
    \includegraphics[width=\linewidth]{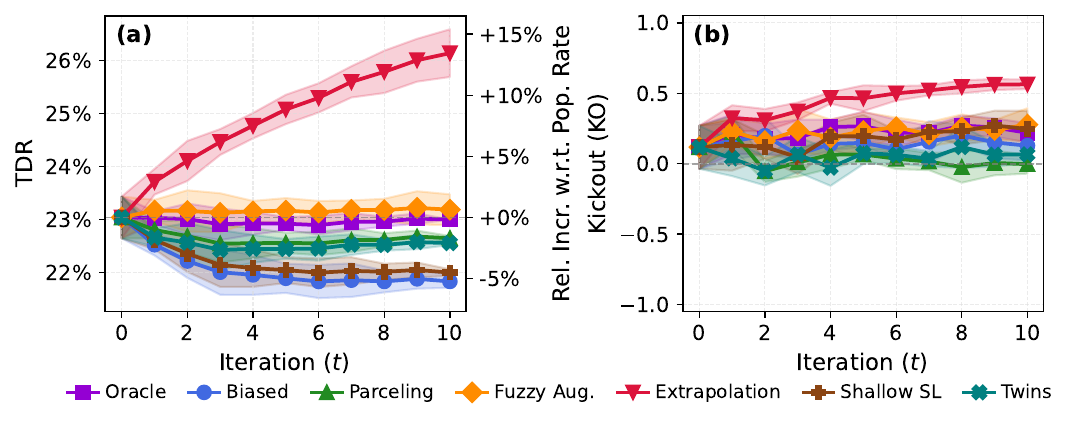}
    \caption{Training set default rate and Kickout for PPDai/GBDT. The population default rate is 23.0\% (dashed line).}
    \label{fig:app-ppdai-gbdt-dr-kickout}
\end{figure}

\begin{figure}[t]
    \centering
    \includegraphics[width=\textwidth]{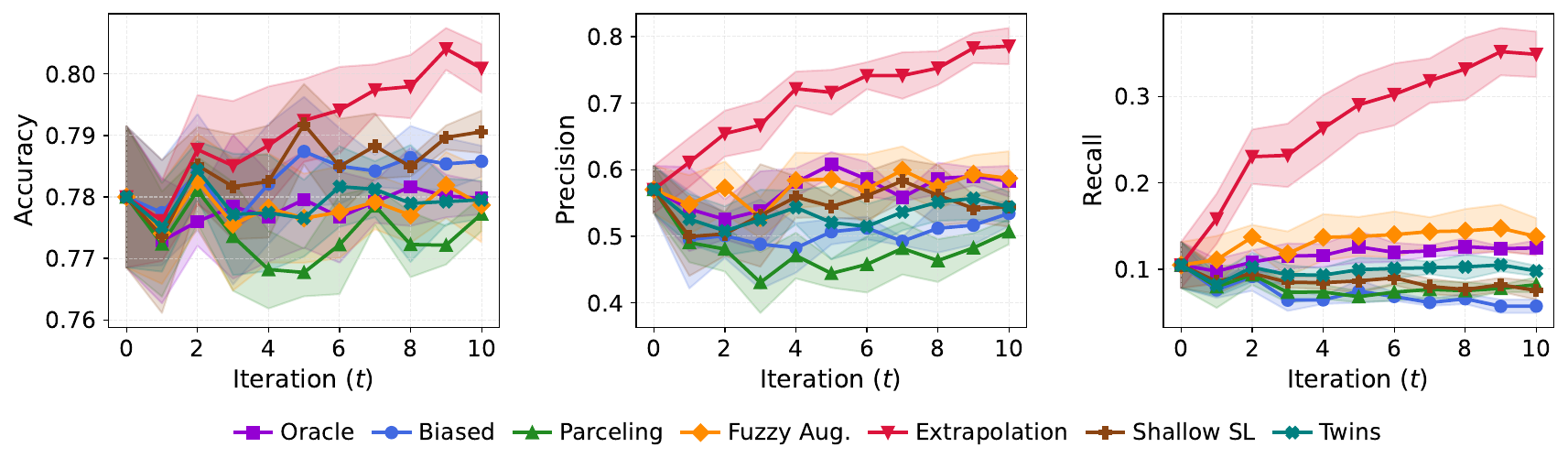}
    \caption{Accuracy, precision, and recall for all RI strategies on PPDai/RF ($n_0{=}5{,}000$, $\tau{=}0.5$).}
    \label{fig:app-ppdai-rf-metrics}
\end{figure}

\subsection{PPDai --- Random Forest}
\label{app:ppdai-rf}

Figure~\ref{fig:app-ppdai-rf-metrics} shows the RI strategy comparison for PPDai/RF. The results are consistent with the GBDT configuration. Notably, the Kickout plot (Figure~\ref{fig:app-ppdai-rf-dr-kickout}) shows Parceling reaching negative values, indicating that its rejection decisions are worse than random on this configuration.

\begin{figure}
    \centering
    \includegraphics[width=\linewidth]{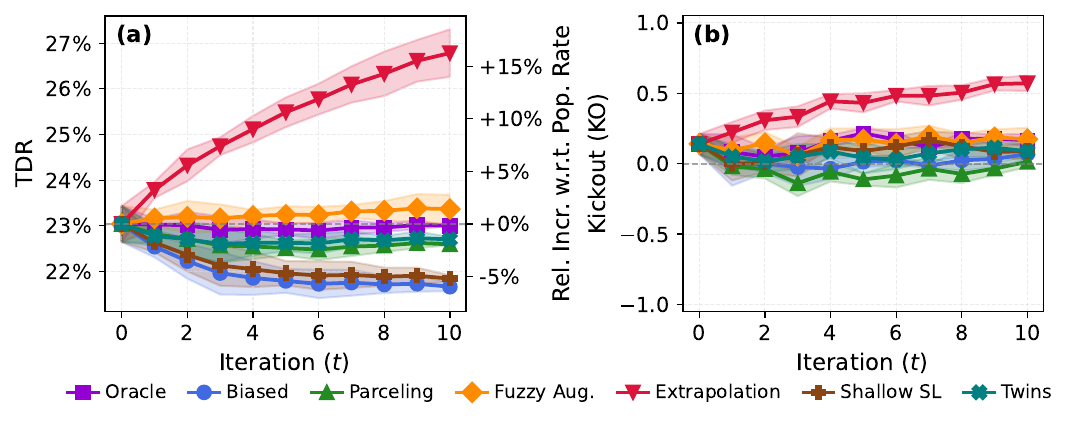}
    \caption{Training set default rate and Kickout for PPDai/RF. Notably, Parceling's Kickout becomes negative on this configuration, indicating that its rejection decisions are actively counterproductive.}
    \label{fig:app-ppdai-rf-dr-kickout}
\end{figure}

\section{Additional Controlled Exploration Results}
\label{app:exploration}

The main text (Section~\ref{sec:exploration}) reports controlled exploration results for Default/GBDT. Figures~\ref{fig:app-ranking-default-rf}--\ref{fig:app-ranking-lendingclub-rf} show ranking strategy comparisons for the remaining dataset--model configurations. 
The ordering holds on average rather than at every rate: Most Risky attains the highest mean Kickout across exploration rates in all six configurations, and Least Risky the lowest, but neither dominates at every individual rate. Random provides a robust middle ground, requiring no model scores.
Magnitudes vary with dataset scale, as discussed in Section~\ref{sec:exploration-tradeoff}.

\begin{figure}[htb]
  \centering
  \begin{subfigure}[b]{0.48\linewidth}
    \centering
    \includegraphics[width=\linewidth]{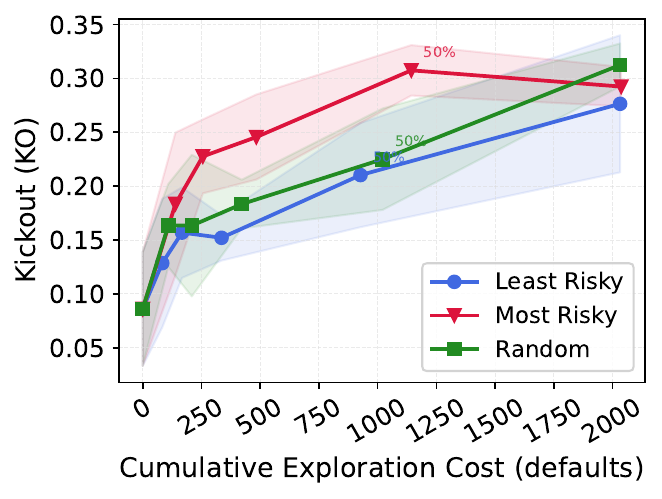}
    \caption{Kickout vs.\ cumulative cost.}
  \end{subfigure}
  \hfill
  \begin{subfigure}[b]{0.48\linewidth}
    \centering
    \includegraphics[width=\linewidth]{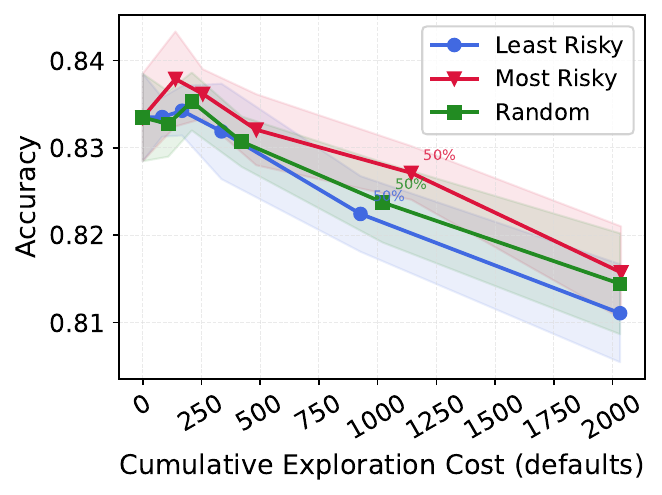}
    \caption{Accuracy vs.\ cumulative cost.}
  \end{subfigure}
  \caption{Ranking strategy comparison on the Default dataset (RF). The pattern mirrors Default/GBDT (Figure~\ref{fig:ranking-comparison}), with a slightly larger accuracy decline ($-1.9\%$ vs.\ $-1.5\%$ at $r = 1.0$).}
  \label{fig:app-ranking-default-rf}
\end{figure}

\begin{figure}[htb]
  \centering
  \begin{subfigure}[b]{0.48\linewidth}
    \centering
    \includegraphics[width=\linewidth]{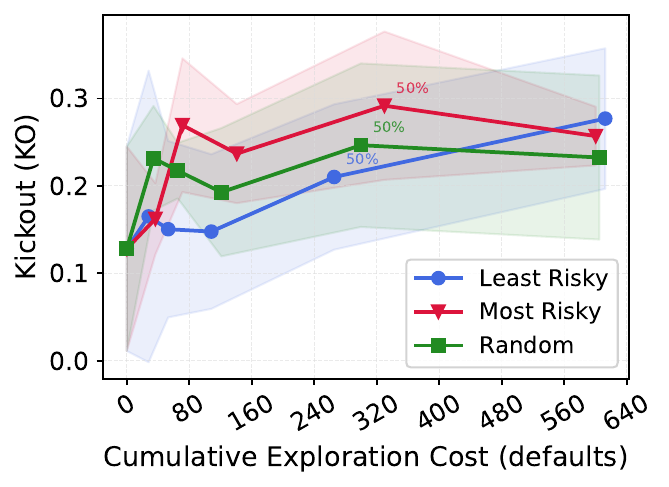}
    \caption{Kickout vs.\ cumulative cost.}
  \end{subfigure}
  \hfill
  \begin{subfigure}[b]{0.48\linewidth}
    \centering
    \includegraphics[width=\linewidth]{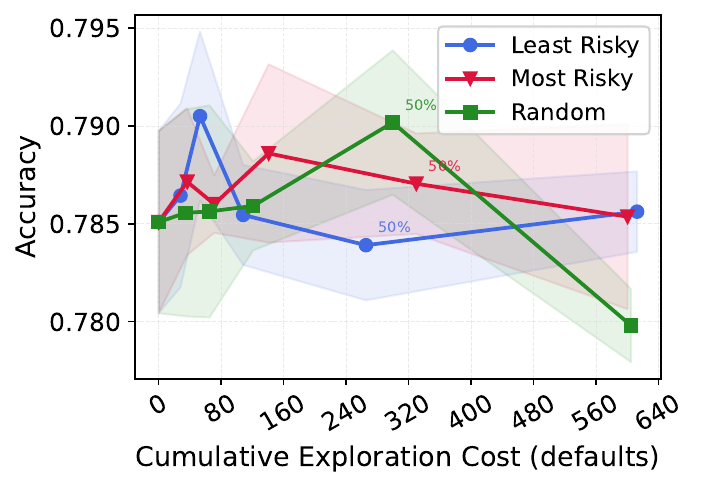}
    \caption{Accuracy vs.\ cumulative cost.}
  \end{subfigure}
  \caption{Ranking strategy comparison on the PPDai dataset (GBDT). Effects are compressed relative to Default, reflecting the smaller TDR gap ($1.17\%$).}
  \label{fig:app-ranking-ppdai-gbdt}
\end{figure}

\begin{figure}[htb]
  \centering
  \begin{subfigure}[b]{0.48\linewidth}
    \centering
    \includegraphics[width=\linewidth]{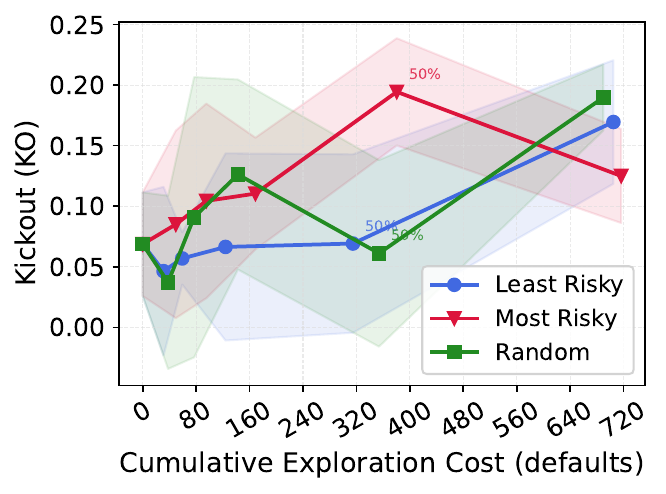}
    \caption{Kickout vs.\ cumulative cost.}
  \end{subfigure}
  \hfill
  \begin{subfigure}[b]{0.48\linewidth}
    \centering
    \includegraphics[width=\linewidth]{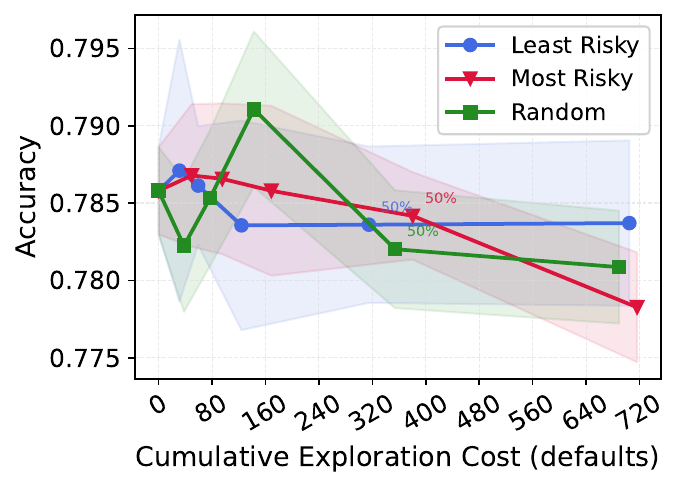}
    \caption{Accuracy vs.\ cumulative cost.}
  \end{subfigure}
  \caption{Ranking strategy comparison on the PPDai dataset (RF). Kickout is non-monotone in the exploration rate, with Most Risky peaking at $r=0.50$ before falling back.}
  \label{fig:app-ranking-ppdai-rf}
\end{figure}

\begin{figure}[htb]
  \centering
  \begin{subfigure}[b]{0.48\linewidth}
    \centering
    \includegraphics[width=\linewidth]{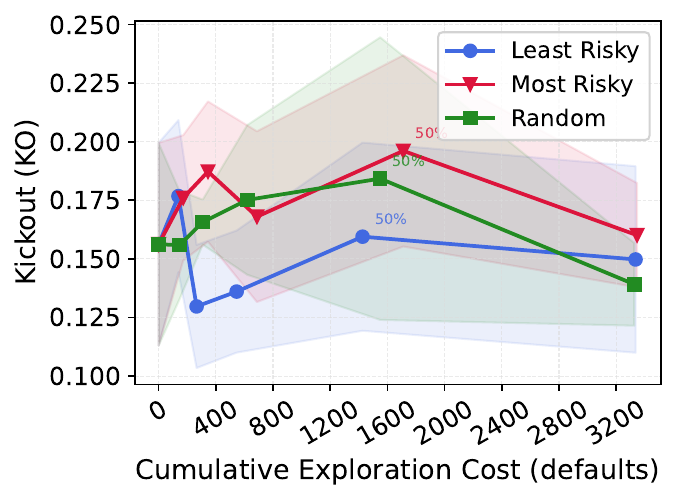}
    \caption{Kickout vs.\ cumulative cost.}
  \end{subfigure}
  \hfill
  \begin{subfigure}[b]{0.48\linewidth}
    \centering
    \includegraphics[width=\linewidth]{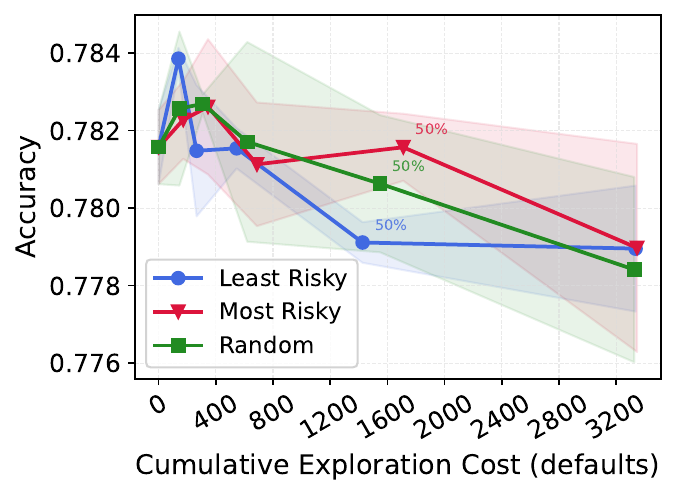}
    \caption{Accuracy vs.\ cumulative cost.}
  \end{subfigure}
  \caption{Ranking strategy comparison on the LendingClub dataset (GBDT). Kickout shows no systematic improvement at any exploration rate, consistent with the small TDR gap ($0.64\%$).}
  \label{fig:app-ranking-lendingclub-gbdt}
\end{figure}

\begin{figure}[htb]
  \centering
  \begin{subfigure}[b]{0.48\linewidth}
    \centering
    \includegraphics[width=\linewidth]{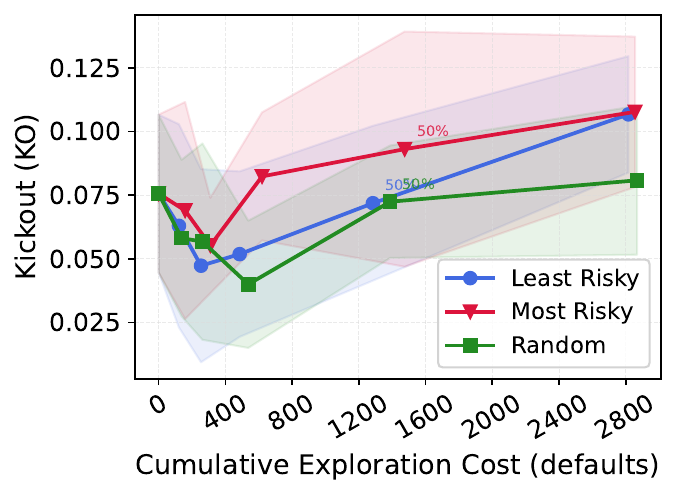}
    \caption{Kickout vs.\ cumulative cost.}
  \end{subfigure}
  \hfill
  \begin{subfigure}[b]{0.48\linewidth}
    \centering
    \includegraphics[width=\linewidth]{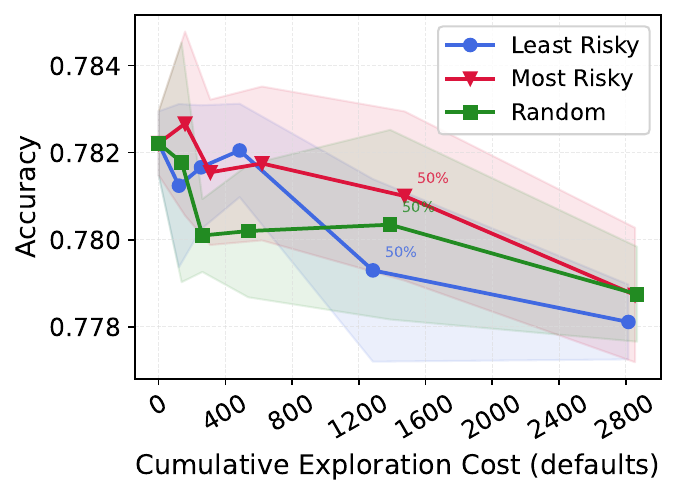}
    \caption{Accuracy vs.\ cumulative cost.}
  \end{subfigure}
  \caption{Ranking strategy comparison on the LendingClub dataset (RF). Effects are the most attenuated of the three datasets: accuracy declines by $0.35\%$, while Kickout falls slightly below its baseline at intermediate rates and recovers to at most a $3.2\%$ gain at full exploration. This is consistent with the minimal TDR distortion ($2.6\%$).}
  \label{fig:app-ranking-lendingclub-rf}
\end{figure}

\end{document}